\theoremstyle{plain}
\newtheorem{theorem}{Theorem}[section]
\newcommand{\customthmname}{}
\newenvironment{customthm}[1]{\renewcommand{\customthmname}{Theorem #1}\namedtheorem}{\endnamedtheorem} 
\newenvironment{theorem*}
 {\expandafter\def\expandafter\thetheorem\expandafter{\thetheorem$^\dagger$}\theorem}
 {\endtheorem} 
\newtheorem{corollary}[theorem]{Corollary}
\newtheorem{lemma}[theorem]{Lemma}
\theoremstyle{definition}
\newtheorem{definition}[theorem]{Definition}
\theoremstyle{remark}
\newtheorem{remark}[theorem]{Remark}
\newcommand{\customremname}{}
\newenvironment{customrem}[1]{\renewcommand{\customremname}{Remark #1}\namedremark}{\endnamedremark} 
\author{Author1}
\newcommand{\N}{\ensuremath{\mathbb{N}}\xspace}
\newcommand{\R}{\ensuremath{\mathbb{R}}\xspace}
\newcommand{\bO}{\mathcal{O}}
\newcommand{\NN}{\ensuremath{\mathcal{N}}\xspace}
\newcommand{\C}{\ensuremath{\mathcal{C}}\xspace}
\newcommand{\CiN}{\ensuremath{\mathcal{C}^{(i)}_{\NN}}\xspace}
\newcommand{\Graph}{\ensuremath{\mathfrak{Graph}}\xspace}
\newcommand{\lgraph}{\ensuremath{\mathfrak{G}}\xspace} 
\newcommand{\CGNN}{\text{C-GNN}\xspace}
\newcommand{\actfct}{\ensuremath{\sigma}\xspace}
\newcommand{\faco}{\ensuremath{\mathrm{FAC}^0_{\R^k}}\xspace}
\newcommand{\facoA}{\ensuremath{\mathrm{FAC}^0_{\R^k}[\mathcal{A}]}\xspace}
\newcommand{\facoR}{\ensuremath{\mathrm{FAC}^0_{\R}}\xspace}
\newcommand{\facoAR}{\ensuremath{\mathrm{FAC}^0_{\R}[\mathcal{A}]}\xspace}
\newcommand{\proj}[2][]{\ensuremath{\textit{proj}_{#1}{#2}\xspace}}
\newcommand{\size}{\textit{size}}
\newcommand{\depth}{\textit{depth}}
\newcommand{\SD}[1]{\mathrm{FSIZE}\text{-}\mathrm{DEPTH}_{#1}}
\newcommand{\SDk}{\SD{\R^k}}
\newcommand{\TCzero}{\mathrm{TC}^0}
\newcommand{\lgc}[1]{\ensuremath{\lgraph_{{#1}, \overline{x}}}} 
\newcommand{\ol}[1]{\overline{#1}}
\newcommand{\enc}[1]{\ensuremath{\langle #1 \rangle}}
\newcommand{\lms}{\ensuremath{\left\{\!\!\left\{}}
\newcommand{\rms}{\ensuremath{\right\}\!\!\right\}}}
\NewDocumentCommand{\ms}{om}{%
  \sbox0{\mathsurround=0pt$#1\{$}%
  \sbox2{\{}%
  \ifdim\ht0=\ht2
    \{\kern-.625\wd2 \{#2\}\kern-.625\wd2 \}%
  \else
    \mathopen{#1\{\kern-.7\wd0 #1\{}
    #2
    \mathclose{#1\}\kern-.7\wd0 #1\}}
  \fi
}
\newcommand{\jonnicolor}{red!20}
\newcommand{\lauracolor}{green!20}
\newcommand{\heribertcolor}{teal!20}
\newcommand{\jonniS}[1]{\todo[size=\tiny, color=\jonnicolor]{Jonni: #1}}
\newcommand{\lauraS}[1]{\todo[size=\tiny, color=\lauracolor]{Laura: #1}}
\newcommand{\heribertS}[1]{\todo[size=\tiny, color=\heribertcolor]{Heribert: #1}}
\title{Graph Neural Networks and Arithmetic Circuits}
\author{%
  Timon Barlag\\
  Institute for Theoretical Computer Science\\
  Leibniz University Hanover\\
  Hanover, Germany \\
  \texttt{barlag@thi.uni-hannover.de} \\
   \And
  Vivian Holzapfel\\
  Institute for Theoretical Computer Science\\
  Leibniz University Hanover\\
  Hanover, Germany \\
  \texttt{holzapfel@thi.uni-hannover.de} \\
   \And
  Laura Strieker\\
  Institute for Theoretical Computer Science\\
  Leibniz University Hanover\\
  Hanover, Germany \\
  \texttt{strieker@thi.uni-hannover.de} \\
   \And
  Jonni Virtema\\
  School of Computer Science\\
  University of Sheffield\\
  Sheffield, United Kingdom\\
  \texttt{j.t.virtema@sheffield.ac.uk} \\
   \And
  Heribert Vollmer\\
  Institute for Theoretical Computer Science\\
  Leibniz University Hanover\\
  Hanover, Germany \\
  \texttt{vollmer@thi.uni-hannover.de} \\
}
\begin{document}
\maketitle






\begin{abstract}
We characterize the computational power of neural networks that follow the graph neural network (GNN) architecture, not restricted to aggregate-combine GNNs or other particular types.
We establish an exact correspondence between the expressivity of GNNs using diverse activation functions and arithmetic circuits over real numbers.
In our results the activation function of the network becomes a gate type in the circuit. 
Our result holds for families of constant depth circuits and networks, both uniformly and non-uniformly, for all common activation functions.

\end{abstract}


\section{Introduction}


\setcounter{footnote}{2}
Neural networks have recently received growing attention from a theoretical point of view in a number of papers studying their computational properties.
Relevant to this paper are examinations of the computational power of neural networks after training, i.e., the training process is not taken into account but instead the computational power of an optimally trained network is studied. 
Starting already in the nineties, the expressive power of feed-forward neural networks (FNNs) has been related to Boolean threshold circuits, see, e.g., \citep{DBLP:conf/focs/MaassSS91,DBLP:journals/jcss/SiegelmannS95,DBLP:journals/siamcomp/Maass97,DBLP:journals/eccc/HeP22}. 
Most importantly, \citet{DBLP:journals/siamcomp/Maass97} showed that when restricting networks to Boolean inputs, a language can be decided by a family of (in a certain sense) ``polynomial-size'' FNNs if and only if it belongs to the class $\TCzero$, i.e., the class of all languages decidable by families of Boolean circuits of constant depth and polynomial size using negation gates and unbounded fan-in AND, OR, and threshold gates. 

In the last five years attention has shifted to the study of \textit{graph neural networks (GNNs)} which is a model for machine learning tasks on graph-structured inputs.
Their expressive power can be closely related to the Weisfeiler-Leman algorithm \citep{DBLP:conf/iclr/XuHLJ19,DBLP:conf/aaai/0001RFHLRG19,DBLP:conf/ijcai/0001FK21}. 
Originally developed to solve the graph isomorphism problem, the WL algorithm is connected to fragments of first-order logic; hence the logical expressiveness of GNNs was subsequently studied. 
\citet{Barcelo_2020} consider so-called logical classifiers---these are unary formulas of first-order (FO) predicate logic that classify a node in a given graph according to whether the formula holds for this node. 
Barcel\'o et al.{} proved that all classifiers definable in a certain fragment GC (guarded first-order logic with counting quantifiers)
are computable by GNNs.
The converse direction is a bit problematic;
it is only known to hold for unary queries definable in first-order logic.
This result has been broadened recently by \citep{DBLP:conf/icalp/BenediktLMT24} using logics with Presburger quantifiers. 
Barcel\'o et al.{} then consider an extension of GNNs by so-called global readout and show that these GNNs can compute all queries from the logic FOC$_2$ (first-order restricted to two variables, but extended with counting quantifiers, a superclass of GC), but the converse is open.

\textbf{Our motivation.}
Neural networks in general and GNNs specifically have mostly been studied through the lens of Boolean functions. 
This does not necessarily reflect on the usage of machine learning models in real-world applications. 
We want to shift this focus to real-valued computations and hence in this paper, we make real numbers first-class citizens.
When networks are digitally simulated in practical applications, of course only rational numbers are used as inputs, and the real numbers that appear during their computation (e.g., via the sigmoid function) are approximated by rationals. However, we are interested in principal statements about the computational power of neural networks---hence we study networks as devices operating with real numbers.
We do not solely consider networks that are restricted to Boolean inputs or Boolean outputs (Boolean queries, logical classifiers), as done in all the papers cited above.
Instead, we consider GNNs as a model to compute functions from (vectors of) real numbers to (vectors of) real numbers, or from labeled graphs (i.e., undirected graphs whose nodes are annotated with vectors of real numbers) to labeled graphs.

Instead of turning to logics, we focus on a computation model that has been used to capture the expressivity of neural networks in the past:
we use circuits, but since we turn away from the Boolean computation model, we use \textit{arithmetic circuits} over the reals, that is, circuits that take real numbers as inputs and have nodes that compute real functions such as addition, multiplication, projection, or a constant function. 
Why do we do that?
It will turn out that in this way we obtain a close correspondence between GNNs and arithmetic circuits. 
Arithmetic circuits can in a second step be simulated by a Boolean computation model, but this does not say anything about the computational power of the networks.
Going directly from neural networks to Boolean circuits mixes up two different issues and obscures statements about the power of GNNs.
By separating the two aspects, we do not only obtain an upper bound, or correspondence with respect to discrete classification tasks, but an equivalence between GNNs and constant-depth arithmetic circuits over the real functions they compute. In this way, we shed more light on the computational model behind GNNs. 
Our results show very explicitly what the computational abilities of GNNs are and what elementary operations they can perform.

We also want to make a general statement about scaling and complexity.
A common narrative nowadays seems to be that by making neural networks larger and larger, they can solve more complex reasoning tasks. 
However, theoretical limitations as proven in this paper show that scaling is not all we need. Our results show that to improve the expressivity of GNNs, more computationally complex aggregation and combine functions are needed.
More precisely, these functions need to come outside of the class \facoR, which contains functions that can be computed by arithmetic circuits that are bounded in size and depth, see Definition \ref{def:fac0} for a precise definition. 
Another approach would be to change the fixed depth framework of GNNs to one that includes recursion, for example.

Following a broad line of research outlined by \cite{MerrillYouTube22},
in this paper, we aim at a general statement about the computational power of neural networks following the GNN architecture, i.e., networks operating in layers where connections within layers and hence their communication capabilities correspond to the given input graph.
However, contrary to most predecessor papers \citep{DBLP:conf/aaai/0001RFHLRG19,Barcelo_2020}, we do not only consider so-called AC-GNNs, i.e., GNNs where the computation in each node of a layer of the network consists first of an aggregation step (collecting information from the adjacent nodes of the graph---very often simply summation), followed by a combine step (combining the aggregated information with the current information of the node---mostly a linear function followed by a unary non-linear activation function such as sigmoid or ReLU). 
AC-GNNs like these are just one example of a GNN architecture.
We take a more general approach. 
The GNN framework defines a continuous graph-based ``message passing'' \citep{gilmer2017neural, DBLP:conf/aaai/0001RFHLRG19}.
In our networks we keep the layout and message passing mechanisms of GNNs but equip the nodes in each layer with constant-depth arithmetic circuits (taken from a previously fixed basis of circuits), not limited to the usual aggregate-combine (AC) operations. 
We call these networks C-GNNs, circuit graph neural networks.
AC-GNNs whose aggregation function is computable by a constant-depth circuit, can be simulated by C-GNNs (using the same activation function).
The computational upper bounds we obtain thus point out general computational limitations for networks following the GNN message passing framework.

\textbf{Main Results.}
The main contribution of our paper is an exact correspondence between C-GNNs and arithmetic circuits.
A function (from labeled graphs to labeled graphs) is computable by a C-GNN with a constant number of layers if and only if it is computable by a constant-depth arithmetic circuit over the real numbers. 
The activation function of the neural network will be a gate type in the arithmetic circuit. 
Thus, the (set of) actual activation function(s) becomes a parameter in a general statement equating the computational power of GNNs and arithmetic circuits. 

A number of remarks are in order.
\begin{itemize}[noitemsep,topsep=0pt]\def\labelitemi{--}
    \item Our result holds for all commonly used activation functions.
    \item Our result is uniform. If we start with a family of GNNs whose individual networks can be generated via some algorithm, the resulting family of arithmetic circuits will be uniform in the same way, and vice versa.
    \item Our result gives general limits for neural networks following the GNN message passing framework, including but not restricted to the currently widely used AC-GNNs. This means that in order to use a GNN to compute a function not computable by constant-depth arithmetic circuits, scaling GNNs up or adding computational power in the individual nodes will not help, but neural networks of a completely different architecture are needed.
\end{itemize}

This paper follows a modular approach. We stick to the GNN message passing framework, but allow computations of arbitrary power in the individual nodes of the graph. Our approach is adaptable and opens possibilities for many extensions; we point out a few in our conclusion section. 

We would like to stress that theoretical insights like those presented in this paper have the potential to guide practical development. We prove principal limits of networks following the GNN framework,  extending the current AC-GNNs. Our results enables the use of theoretical results regarding arithmetic circuit complexity for arguing about the complexity of GNNs. For example findings for a complexity hierarchy within the real valued \facoR will lead to corresponding statements about the power of GNNs. They may also be used to design GNN architectures with provable expressivity bounds.\looseness=-1

\textbf{Organization.}
In the next section, we introduce the relevant notions about GNNs and arithmetic circuits. 
Section~\ref{main} contains our results. 
In Subsection~\ref{sec:model_of_comp}, we introduce Circuit-GNNs as a generalization of AC-GNNs. 
In Subsections~\ref{sec:sim_cgnn_with_circ} and \ref{sec:sim_circ_with_cgnn}, we show how to simulate C-GNNs by arithmetic circuits, and vice versa.
In Section~\ref{sec:conclusion}, we point out a number of questions for further research.
Due to lack of space, some proofs marked with $\dagger$ are deferred to the appendix.
\jonniS{Do we want to add the appendix to the paper or leave it out? Reference would then be to arxiv.}
\heribertS{Do we plan to publish the paper in a journal? If yes, then we should definitely leave out the appendix here. But in any case, I think I am in favor of leaving it out for now.}

\textbf{Related Work.}
The first model of GNNs was introduced by \cite{DBLP:journals/tnn/ScarselliGTHM09}. 
Since then, their expressive power has been studied in numerous works and from various perspectives as discussed above.
As mentioned already, \citet{Barcelo_2020} established a close connection between classifiers definable in guarded FOC$_2$ and computable by AC-GNNs. It is worth to point out that a connection very close to the aforementioned one can be obtained by connecting the results of \citet{DBLP:conf/nips/SatoYK19}, that relate graph neural networks with more general models of distributed computation, to the results of \citet{DBLP:journals/dc/HellaJKLLLSV15}, that logically characterize expressivity of those models of distributed computation with graded modal logics (known to be equivalent with guarded FOC$_2$).
\cite{DBLP:conf/lics/Grohe23} very recently extended these characterization results for GNNs, and furthermore connected GNNs to Boolean circuits. 
He considers an FO-fragment GFO+C (guarded fragment of FO plus a more general form of counting than used by Barcel\'o et al.) and proves equivalence of this logic to GNNs for unary queries, that is for Boolean functions (functions with Boolean output). 
However, his overall result holds only in the non-uniform setting
(that is, the sequence of GNNs for graphs of different sizes is non-uniform in the sense that there is not necessarily an algorithm 
that, given a graph size, computes that network from the sequence responsible for the given size), and the activation functions have to be ``rpl approximable'' (i.e., rational piecewise linear approximable, i.\,e., the real functions must in a certain way be a approximable by Boolean circuits or logic). 
This includes most of the commonly used activation functions. 
Since queries expressible in this logic are known to be $\TCzero$ computable, he obtains a $\TCzero$ upper bound for the computational power of GNNs. 
For the converse connection, an extension of GNNs is necessary, and it is shown that a unary query computable by a $\TCzero$ circuit is computable by a GNN with so-called random initialization.
Another line of work that focuses on real-valued computations includes \citep{DBLP:conf/foiks/GeertsSB22} where the logic MPLang operating directly on real numbers was introduced to readily express GNN computations.
The main goal of that paper was to reason about GNNs using a logic, extended by real-number computations. 
However, the converse connection, from MPLang to GNNs, is highly unclear. 
Other work addressing the precise characterizations of expressivity issues of GNNs outside of the restriction to the Boolean setting with GNNs using formal logic include \citep{cucala2023correspondence, pfluger2024recurrent}. \lauraS{or cite Benedikt here?}
To the best of our knowledge, there are no other works considering real-valued computations and circuits or arithmetic circuits.
Another approach that leans more into comparing the expressivity when using different functions for aggregation in the GNN can be found in \cite{DBLP:journals/corr/abs-2302-11603}, where they compared the expressivity of GNNs using sum as their aggregation with mean and max GNNs.

\section{Preliminaries}
\label{sec:prelim}

Throughout this paper, the graphs we consider have an ordered set of vertices and are undirected unless otherwise specified. 
We use overlined letters to denote tuples, write $\lvert \overline{x} \rvert$ to denote the length of $\ol{x}$ (i.e., the number of elements of $\ol{x}$), and use $[n]$ to denote the first $n$ nonzero natural numbers $\{1, \dots, n\}$. 
For a $k \in \N$ we denote all possible $n$-tuples for every $n$ of $\R^k$ with $\left(\R^k\right)^*$.
The notation $\lms \rms$ is used for multisets. A restriction of a function $f$ to set $S$ is written as $f\!\! \restriction_S$.



\begin{definition}
    Let $k \in \N$, and let $G=(V,E)$ be a graph with an ordered set of vertices $V$ and a set of undirected edges $E$ on $V$. 
    Let $g_V \colon V \to \R^k$ be a function which labels the vertices with attribute vectors.
    We then call $\lgraph = (V, E, g_V)$ a \emph{labeled graph of dimension $k$} and denote by $\Graph$ the set of all labeled graphs and by $\Graph_k$ the set of all labeled graphs of dimension $k$. 
\end{definition}
For a node $v \in V$, the \emph{neighborhood of $v$} is defined as
      $N_{\lgraph}(v) \coloneqq \{w \in V \mid \{v, w\} \in E \}$.

In order to compare standard graph neural networks and the circuit graph neural networks which we will introduce later on, we fix the notion of a graph neural network first. 
Generally, GNNs can classify either individual nodes or whole graphs. 
For the purpose of this paper, we will only introduce node classification as graph classification works analogously.
We focus on aggregate combine graph neural networks that aggregate the information of every neighbor of a node and then combine this information with the information of the node itself.

An \emph{aggregation function} (of dimension $k$) is a permutation-invariant function $\text{AGG}: \R^k \times \dots \times \R^k \to \R^k$, a \emph{combine function} (of dimension $k$) is a function $\text{COM}: \R^k \times \R^k \to \R^k$ and a \emph{classification function} (of dimension $k$) is a function $\text{CLS}: \R^k \to \{0,1\}$ that classifies a real vector as either true or false.
Finally, an \emph{activation function} (of dimension $k$) is a componentwise 
function $\actfct\colon \R^k \to \R^k$.
\begin{definition}[AC-GNN, cf. \citep{Barcelo_2020}]
    An $L$ layer  \emph{aggregate combine graph neural network} (AC-GNN) is  a tuple $\mathcal{D}=(\{ \text{AGG}^{(i)}\}_{i=1}^L, \{ \text{COM}^{(i)} \}_{i=1}^L, \{\actfct^{(i)}\}_{i=1}^L, \allowbreak\text{CLS})$, where $\{ \text{AGG}^{(i)}\}_{i=1}^L$ and $\{ \text{COM}^{(i)} \}_{i=1}^L$ are sequences of aggregation and combine functions, $\{\actfct^{(i)}\}_{i=1}^L$ is a sequence of activation functions and CLS is a classification function.

    Given a labeled graph $\lgraph=(V,E, g_V)$,
    the AC-GNN model computes vectors $\ol{x}_v^{(i)}$ for every $v \in V$ in every layer $1\leq i \leq  L$ as follows:
    $\ol{x}_v^{(0)}=g_V(v)$ is the initial feature vector of $v$, and for $i > 1$
    \begin{align*}   
     &\ol{x}_v^{(i)} = \actfct^{(i)} \left( \text{COM}^{(i)}\left( \ol{x}_v^{(i-1)}, \ol{y} \right)\right)\text{, where } \ol{y} = \text{AGG}^{(i)} \left(\ms[\big]{ \ol{x}_u^{(i-1)} \mid u \in N_\lgraph(v) } \right). 
     \end{align*}
    The classification function CLS: $\R^k \to \{0,1\}$ is applied to the resulting feature vectors $\ol{x}_v^{(L)}$. 
        \label{def:ac_gnn}
\end{definition} 

In this paper we focus on the real-valued computation part of GNNs and discard the classification function. 
We consider the feature vectors $\ol{x}_v^{(L)}$ after the computation of layer $L$ as our output.
While we could also integrate CLS into our model, for our concerns this is not needed.


Next we define arithmetic circuits as a model of computation for computing real functions. 
Since we will put them in the context of graph neural networks, which inherently operate on vectors of real numbers rather than individual reals, we will accordingly define arithmetic circuits relative to $\R^k$ rather than relative to $\R$, as is done more commonly (cf. e.g. \citep{ComplexityAndRealComputation}).

\begin{definition}
    Let $k,n,m \in \mathbb{N}$.
    An \emph{$\R^k$-arithmetic circuit} with $n$ inputs and $m$ outputs is a simple directed acyclic graph of labeled nodes, also called \emph{gates}, such that
    \begin{itemize}[noitemsep,topsep=0pt]
        \item there are exactly $n$ gates labeled \emph{input}, which each have indegree $0$,
        \item there are exactly $m$ gates labeled \emph{output}, which have indegree $1$ and outdegree $0$,
        \item there are gates labeled \emph{constant}, which have indegree $0$ and are labeled with a tuple $c \in \R^k$,
        \item there are gates labeled \emph{projection$_{i, j}$} for $1 \leq i,j \leq k$, which have in- and outdegree $1$,
        \item there are gates labeled \emph{addition} and \emph{multiplication}.
    \end{itemize}
    Additionally, both the input and the output gates are ordered.
    
    An $\R^k$-arithmetic circuit $C$ with $n$ inputs and $m$ outputs computes the function $f_C \colon (\R^k)^n \to (\R^k)^m$ as follows:
    Initially, the input to the circuit is placed in the input gates.
    Afterwards, in each step, each arithmetic gate whose predecessor gates all have a value, computes the respective function it is labeled with, using the values of its predecessors as inputs.
    By addition and multiplication we refer to the respective componentwise operations and
    the projection computes the function $\proj[i,j]{} \colon \R^k \rightarrow \R^k,\: (x_1, \dots, x_i, \dots x_k) \mapsto (0, \dots, 0, \underset{\text{position }j}{x_i}, 0, \dots, 0)$.
    Analogously, each output gate takes the value of its predecessor, once its predecessor has one. 
    The output of $f_C$ is then the tuple of values in the $m$ output gates of $C$ after the computation.

    The \emph{depth} of $C$ (written $\depth(C)$) is the length of the longest path from an input gate to an output gate in $C$ and the \emph{size} of $C$ (written $\size(C)$) is the number of gates in $C$.
%
    For a gate $g$ in $C$, we will write $\depth(g)$ to denote the length of the longest path from an input gate to $g$ in $C$.
    \label{def:circuit}
\end{definition}

\begin{remark} \label{rem:simu}
    In the context of the circuits over $\R^k$ that we have just defined, a circuit over $\R$ is just a circuit over $\R^1$.
    Note that for circuits over $\R^1$, projection gates are just identity gates and can therefore be omitted.
    It should also be noted that for any fixed $k \in \N$, circuits over $\R^k$ and circuits over $\R$ can easily simulate each other. 
    This result is discussed more extensively in Section \ref{app:prelim} of the Appendix.\looseness=-1
\end{remark}

Since an arithmetic circuit itself can only compute a function with a fixed number of arguments, we extend this definition to families of arithmetic circuits in a natural way.



\begin{definition}
    An \emph{$\R^k$-arithmetic circuit family} $\C$ is a sequence $(C_{n})_{n \in \N}$ of circuits, where each circuit $C_{n}$ has exactly $n$ input gates. 
    Its $\depth$ and $\size$ are functions mapping natural numbers $n$ to $\depth(C_n)$ and $\size(C_n)$, respectively.


    An arithmetic circuit family $\C = (C_{n})_{n \in \N}$ computes the function $f_\C \colon (\R^k)^* \to (\R^k)^*$ defined as 
    \(
        f_\C(\ol{x}) \coloneqq f_{C_{\lvert \ol{x} \rvert}}(\ol{x}).
    \)
    \label{def:circuit_family}
\end{definition}

\begin{remark}
    Since a circuit family is an infinite sequence of circuits, one would be hard pressed to consider such a family a (finite) algorithm.
    Such models of computation, where a different instance is needed for any different input length, is called \emph{non-uniform}.
    In order to deem a circuit family $(C_n)_{n \in \N}$ to be an algorithm, a frequent requirement is the existence of an algorithm which, when given $n$ as an input, outputs the circuit $C_n$.
    Circuit families, for which such an algorithm exists are called \emph{uniform} circuit families.
    For more details on circuit uniformity, see e.g. \cite{DBLP:books/daglib/0097931} and for more on uniformity with respect to real computation, see \cite{ComplexityAndRealComputation}.
\end{remark}

\begin{definition}
    \label{def:fnc_class}
    For any $k \in \N$ and any two functions $s, d \colon \N \to \N$, $\SDk(s, d)$ is the class of all functions $(\R^k)^* \to (\R^k)^*$ that are computable by arithmetic circuit families of size in $\bO(s)$ and depth in $\bO(d)$.
    Let $\mathcal{A}$ be a set of functions of the form $f \colon \R^k \to \R^k$. 
    Then $\SDk(s, d)[\mathcal{A}]$ is the class of functions computable by arithmetic circuits with the same constraints, but with additional gate types $g_f$, with indegree and outdegree $1$ that compute $f$, for each $f\in\mathcal{A}$.  
%
\end{definition}
    We call all classes of the form $\SDk(s,d)[\mathcal{A}]$ (and their subclasses) \emph{circuit function classes}.
As usual in circuit complexity, an \emph{$\mathfrak{F}$-circuit family}, where $\mathfrak{F}$ is a class of functions, will denote a circuit family that computes a function in $\mathfrak{F}$.

We restrict this work to versions of one specific circuit function class.
\begin{definition}
    \label{def:fac0_A}
    \faco is the class of all functions $f\colon (\R^k)^* \to (\R^k)^*$ that can be computed by arithmetic circuit families $(C_n)_{n \in \N}$ of constant depth and polynomial size, i.e., $\faco = \SDk(n^{\bO(1)}, 1)$.
\end{definition}

By Definition \ref{def:fnc_class}, \facoA-circuit families are \faco-circuit families where the circuits are expanded by additional gate types for all functions in $\mathcal{A}$.

A node in a graph neural network aggregates the values of its neighbors in each step, irrespective of their order, and then combines them with its own previous value.
In order for our circuits to mimic this behaviour, we impose a symmetry condition on them.
We require the functions they compute to be \emph{tail-symmetric}, i.e., to only be able to single out their first argument.


\begin{definition}
    \label{def:tail_symmetric_fnc}
   A function $f \colon (\R^k)^* \to (\R^k)^*$ is \emph{tail-symmetric}, if
    \(
        f(\ol{x}_1, \dots, \ol{x}_n) = f(\ol{x}_1, \pi(\ol{x}_2, \dots, \ol{x}_n))
    \)
    for all permutations $\pi$.
\end{definition}

\begin{remark}
    One way to construct a tail-symmetric function $f$ is to take a binary function $g$ and a fully symmetric function $h$ of arbitrary arity and compose them: $f(x_1, \dots, x_n) \coloneqq g(x_1, h(x_2, \dots, x_n))$.
    Note that this is precisely the notion required for the aggregate-combine step in AC-GNNs as per Definition~\ref{def:ac_gnn}.
\end{remark}

 An arithmetic circuit $C$ is \emph{tail-symmetric} if $f_C$ is tail-symmetric.
In the sequel, we will consider families of tail-symmetric functions computed by some circuit family $\mathcal{C}$. 
In this setting, the circuit family computes one unique function $f_n$ for each arity $n\in\mathbb{N}$. 
We will then write $f_\mathcal{C}\left(\ol{x}_1, \lms \ol{x}_2, \dots, \ol{x}_n \rms \right)$ to denote $f_{n}( \ol{x}_1, \dots, \ol{x}_n)$, when $\lms \ol{x}_2, \dots, \ol{x}_n \rms$ is a multiset of cardinality $n-1$.

\begin{definition}
    \label{def:fac0}
    Let $\mathfrak{F}$ be a circuit function class.
    We denote by $t\mathfrak{F}$ the class which contains all functions of $\mathfrak{F}$ that are tail-symmetric.
\end{definition}





\section{Graph Neural Networks using Circuits}
\label{main}

We now define the main computation model of interest for this paper, namely circuit graph neural networks (C-GNNs).
C-GNNs work similarly to plain AC graph neural networks. 
Instead of using the concept of a two-step aggregate and combine computation via which the new feature vector of a node is computed, C-GNNs permit that this computation is done by an arithmetic circuit (with particular resource bounds).
This allows us to classify GNNs based on the complexity of their internal computing functions.

\subsection{Model of Computation}
\label{sec:model_of_comp}
A \emph{basis} of a C-GNN is a set of functions of a specific circuit function class 
along with a set of activation functions.
The respective networks will be based on these functions.

\begin{definition}
    Let $\mathcal{S}$ be a non-empty set of 
    functions from $(\R^k)^*$ to $(\R^k)^*$
    and let $\mathcal{A}$ be a non-empty set of activation functions of dimension $k$. 
    Then we call the set $\mathcal{S} \times \mathcal{A}$ a \emph{C-GNN-basis} of dimension $k$.\looseness=-1
\end{definition}

C-GNNs of a particular basis essentially consist of a fixed depth and a function assigning circuit families and activation functions from its basis to its different layers.

\begin{definition}
    Let $B=\mathcal{S} \times \mathcal{A}$ be a C-GNN-basis of dimension $k$. 
    A \emph{circuit graph neural network (of dimension $k$)} of depth $d \in \N$ is a function $\NN \colon [d] \to B$.
If all functions in $\mathcal{S}$ belong to a circuit function class $\mathfrak{F}$, we also say that $\NN$ is a $(\mathfrak{F}, \mathcal{A})$-GNN.\looseness=-1
\end{definition}

\begin{definition}
\label{def:cgnn_fnc}
    Let $\NN$ be a C-GNN of depth $d$.
    The function $f_\NN \colon \Graph_k \to \Graph_k$ computed by $\NN$ is defined as follows: 

    Let $\lgraph = (V, E, g_V)$ be a labeled graph.
    The labeled graph $\lgraph'$ computed by $f_\NN(\lgraph)$ has the same structure as $\lgraph$, however, its nodes have different feature vectors.
    That is $f_\NN(\lgraph) = \lgraph'$, where $\lgraph' = (V, E, h_V)$ and $h_V$ is defined inductively as follows:
    \begin{align*}
       h_V^{(0)}(w) &\coloneqq~  g_V(w)\\
        h_V^{(i)}(w) &\coloneqq~  \actfct^{(i)} \left( f_{\mathcal{C}^{(i)}}\left(h_V^{(i-1)}(w), M \right) \right), \text{ with } M = \lms h_V^{(i-1)}(u) \mid u \in N_{\lgraph}(w) \rms
    \end{align*}
    where $\NN(i)=\left(\mathcal{C}^{(i)},\actfct ^{(i)}\right)$.
    Finally 
    \(
        h_V(w) \coloneqq  h_V^{(d)}(w).
    \)
\end{definition}

The C-GNN model that we have just defined computes functions from labeled graphs to labeled graphs (with the same graph structure).

Our results are stated for C-GNNs using functions from versions of the function class $\faco$ as defined in Definitions~\ref{def:fac0_A} and \ref{def:fac0}.

\begin{remark}
    \label{rem:relation_ac_gnn_c_gnn}
    We can relate our C-GNN model to the continuous computation of traditional AC-GNNs.
    For any AC-GNN following Definition~\ref{def:ac_gnn} with activation functions $\mathcal{A}$ and where the aggregation functions are computable by $\faco$-circuit families, there exists a $(\faco, \mathcal{A})$-GNN with a constant number of layers which computes the same function, omitting only the functionality of the classification function.
    This holds for the common aggregation functions like the sum, product or mean.
\end{remark}

\subsection{Simulating C-GNNs with arithmetic circuits}
\label{sec:sim_cgnn_with_circ}

\begin{definition}
    Let $\lgraph = (V, E, g_V)$ be a labeled graph, $n \coloneqq \lvert V \rvert$, and
    $M=\text{adj}(\lgraph)$ be the adjacency matrix of $(V, E)$, where the columns are ordered in accordance with the ordering of $V$.
    We write $\enc{M}$ to denote the encoding of $M$ as the $n^2$ matrix entries written using vectors $(0,\dots, 0)$ and $(1,\dots, 1)$ in $\R^k$, ordered in a row wise fashion. 
    We denote them by $\ol{m}_{ij}$ encoding the matrix entry $m_{ij}$.
    We write $\enc{\lgraph}$ to denote the encoding of $\lgraph$ as a tuple of real valued vectors, such that $\enc{\lgraph}= \left(\enc{M}, \text{vec}(\lgraph)\right) \in (\R^k)^{n^2+n}$, which consists of the encoding of $M$ followed by $\text{vec}(\lgraph)$, the feature vectors of $\lgraph$.
    The feature vectors $\text{vec}(\lgraph)$ are $g_V(v)$ for all $v \in V$ and ordered like $V$.

    
\end{definition}

\begin{theorem*}
    Let $\NN$ be a $(t\facoA, \{\mathrm{id}\})$-GNN.
    Then there exists an $\facoA$-circuit family $\mathcal{C}$, such that for all labeled graphs $\lgraph$ the following holds:
        $f_\NN(\lgraph) = \mathfrak{G'} \text{ iff } \mathcal{C}(\enc{\lgraph}) = \enc{\mathfrak{G'}}$,
    where $\enc{\lgraph} = (\enc{\text{adj}(\lgraph)}, \enc{\text{vec}(\lgraph)})$. 
    \label{thm:cgnn_to_circ}
\end{theorem*}
\begin{proof}
    
    We essentially roll out the given $(t\facoA, \{\mathrm{id}\})$-GNN $\NN$ by using a circuit to simulate each individual layer of $\NN$ and concatenating these circuits to then simulate $\NN$.
    Since the computation in each node of $\NN$ can be performed by a circuit of an $\facoA$-family, simulating this computation can be done without an issue.
    Similarly, connecting the simulated nodes and layers can be done relatively simply, and thus the resulting circuit family is an $\facoA$-family.
\end{proof}

\begin{remark}    
    {Note that this proof preserves uniformity: if the sequence of C-GNNs is uniform, so is the circuit family.
    Furthermore, for any set of activation functions $\mathcal{A}$, $(t\facoA, \{\mathrm{id}\})$-GNNs are at least as powerful as $(t\faco, \{\mathrm{id}\} \cup \mathcal{A})$-GNNs and $(t\faco, \{\mathrm{id}\})$-GNNs.
    The latter do not have access to the functions of $\mathcal{A}$ at all and $(t\faco, \{\mathrm{id}\} \cup \mathcal{A})$-GNNs are more restricted in the use of the functions than $(t\facoA, \{\mathrm{id}\})$-GNNs.}
\end{remark}

\subsection{Simulating arithmetic circuits with C-GNNs}
\label{sec:sim_circ_with_cgnn}

In order to simulate $\faco$-circuit families using C-GNNs, we utilize the following normal form. 

\begin{definition}
    A circuit $C$ is in \emph{path-length normal form} if every path from an input to an output gate has the same length and every non-input and non-output gate has exactly one successor.
\end{definition}

\begin{lemma}
    Let $\mathcal{C}$ be an $\faco$-circuit family.
    Then for every circuit $C \in \mathcal{C}$ there exists a circuit $C'$ of the same depth in path-length normal form such that \(f_{C'}(\ol{x}) = f_{C}(\ol{x})\), for all $\ol{x} \in (\R^k)^*$.
\end{lemma}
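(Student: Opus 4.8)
The plan is to transform each circuit $C \in \mathcal{C}$ by two successive, function-preserving edits: first enforce the single-successor condition, then equalize path lengths. Both edits leave the computed function $f_C$ and the depth $\depth(C)$ unchanged and, crucially, blow up the size only polynomially, so the resulting family stays an $\faco$-family. I would carry out the fan-out edit before the padding edit, because padding inserts only single-successor gates on individual edges and therefore cannot disturb the single-successor property once it has been established, whereas doing the padding first would not help with fan-out at all.

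For the single-successor condition, I would unfold $C$ into a tree-like circuit while keeping the input gates shared. Concretely, process the internal gates (constant, projection, addition, multiplication) and, whenever a gate $g$ feeds $r > 1$ successors, replace $g$ by $r$ copies, one per successor, each copy inheriting its own copy of the entire subcircuit feeding $g$ --- except that the input gates are never duplicated (the normal form permits inputs to have many successors, and by Definition~\ref{def:circuit} they must remain the unique $n$ input gates). Output gates already have outdegree $0$, so they need no attention. This duplication does not create any input-to-output path longer than an existing one, so $\depth$ is preserved. For the size bound, note that in a circuit of depth $c = \depth(C)$ the number of distinct paths ending at a fixed output gate is at most $\size(C)^{c}$; hence each output's tree has $\bO(\size(C)^{c})$ gates and the whole unfolding has size $\bO(\size(C)^{c+1})$. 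Since $\mathcal{C}$ is an $\faco$-family, $c$ is constant and $\size(C) = n^{\bO(1)}$, so this is still polynomial.

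For the path-length condition, let $c = \depth(C)$ again and consider any input-to-output path of length $\ell < c$. I would lengthen it to exactly $c$ by splicing $c - \ell$ identity gates onto one of its edges, where an identity gate is realized in depth one as an addition gate fed by the incoming value together with a fresh constant gate labeled $\ol{0} = (0, \dots, 0)$ (for $k = 1$ a projection gate already serves as the identity, cf. Remark~\ref{rem:simu}). Each such gadget has indegree and outdegree $1$, so it preserves the single-successor property established in the previous step and does not create any path longer than $c$; after padding every path, all input-to-output paths have length $c$ and $\depth(C') = c = \depth(C)$. The number of edges is polynomial and each is padded by at most $c = \bO(1)$ gadgets, so the size remains polynomial.

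The main obstacle is the size control in the unfolding step: duplicating gates to kill fan-out is in general an exponential operation, and it is only the constant-depth hypothesis of $\faco$ that tames the $\size(C)^{\depth(C)}$ blowup into a polynomial. I would therefore make the counting of gate-to-output paths explicit and check that addition and multiplication gates, which may have unbounded fan-in, are covered by the same bound. A secondary point to verify is that the two edits genuinely commute with $f_C$: the identity gadgets compute the identity on $\R^k$, and duplicating a gate together with its cone of predecessors reproduces exactly the same value at each copy, so every output gate still receives the value it received in $C$.
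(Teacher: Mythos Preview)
Your proposal is correct and follows essentially the same two-step approach as the paper: first remove multi-successor gates by duplicating gates together with their history (keeping input gates shared), then pad short input-to-output paths with identity gates, with the constant depth of $\faco$ guaranteeing that the $\size^{\depth}$ duplication blowup stays polynomial. The only cosmetic differences are that the paper realizes the identity as a unary addition gate (rather than addition with a fresh $\ol{0}$ constant) and argues the size bound as ``polynomial overhead per depth layer, constantly many layers'' rather than via an explicit path count.
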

\begin{proof}
    We describe a procedure that transforms $C$ to $C'$.
    If a gate $h$ in $C$ has more than one successor, the subgraph consisting of $h$ and all paths from input gates to $h$ are copied in $C'$ for each successor of $h$. 
    These copies are then used as the respective input for those successors which results in every gate in $C'$ having only one successor.
    This is done iteratively, where in each step only gates in one depth layer of the circuit are modified, starting with depth 1, the gates closest to the input gates. 
    Each step in this procedure incurs a polynomial overhead in size. 
    Since $C$ is of constant depth there are only constantly many iteration steps and thus $C'$ remains polynomial in size. 
    \looseness=-1
     
    Let $\ell_{max}=\depth(C)(=\depth(C'))$ be the length of the longest path from an input to an output node in $C$.
    For every path of length $\ell$ which is shorter than $\ell_{max}$, we add to $C'$ $\ell_{max}-\ell$ unary addition gates directly after the input.
    The depth of the resulting circuit $C'$ equals the one of $C$, as all changes to path lengths are made with this upper bound in mind.
    Since all changes made preserve the computed function and the produced $C'$ is in path-length normal form, the claim follows.
\end{proof}

Since C-GNNs get labeled graphs as their input, while arithmetic circuits typically work on vectors over $\R^k$, we show next how to represent a circuit as an input to a C-GNN.

\begin{definition} \label{def:circ_to_cgnn_feature_graph_construction}
    If $C$ is a circuit with input $\ol{x}$ whose gates are additionally numbered with unique elements of $\R$, let \lgc{C} be the labeled graph $(G, g_{V})$, where $G=(V, E)$ is the underlying graph of the circuit and the function $g_{V}$ is defined as follows.
    The feature vector of a graph node corresponding to an input gate is the corresponding input value.
    Each other node gets its own number according to the gate numbering of the circuit as its feature vector. 
\end{definition}

The following theorems describe how C-GNNs are able to simulate arithmetic circuits. 
The proofs are given for $\R$ circuits, but as mentioned in Remark \ref{rem:simu} and proved in Corollary \ref{col:Rk_R_circ_equiv} families of constant depth and polynomial size $\R^k$ and $\R$ circuits can simulate each other with constant increase/decrease in size, so they also hold for $\R^k$ circuits.

\begin{theorem*}
\label{thm:circ-gnn-without-fnc}
    Let $\mathcal{C} = (C_n)_{n \in \N}$ be an $\mathrm{FAC}^0_{\R}$-circuit family.
    Then there exists a $(t\mathrm{FAC}^0_{\R}, \{\mathrm{id}\})$-GNN $\NN$, such that for all $n \in \N$ and $\ol{x} \in \R^n$,
    the feature vectors of the nodes of $f_{\NN}(\lgc{C_n})$ corresponding to output gates in $C_n$ are exactly 
    the components of $f_{C_n}(\ol{x})$ in the same order. 
    The number of layers in $\NN$ is equal to the depth of $\mathcal{C}$.
    %

\end{theorem*}
\begin{proof}
    The general idea is to use the graphs of the circuits from $\mathcal{C}$ as input graphs for a C-GNN and then simulate the gates layerwise. 
    The gates of $\mathcal{C}$ are numbered and the numbering is represented in the feature vectors for each node.
    In each layer circuits of the C-GNN use a lookup table to decide which operation is applied to a feature vector (see Lemma \ref{lem:chiA}).
    An example illustrating the proof idea is given in Figure \ref{fig:circ_to_cgnn_no_fnc} and Table \ref{tbl:circ_to_cgnn_without_fnc}. For simplicity reasons an example omitting any special cases is chosen.\looseness=-1
\end{proof}

\begin{figure}[htb]
    \begin{subfigure}[b]{0.45\textwidth}
        \centering
        \begin{tikzpicture}[
            gate/.style={circle, draw, minimum size =0.5cm}
            ]
            \node[]         (vin1)                                      {$6$};
            \node[]         (vin2)  [right=of vin1]                     {$9$};
            \node[]         (vin3)  [right=of vin2]                     {$5$};
            \node[gate]     (v+)    at ($(vin1)!0.5!(vin2)+(0,-1)$)   [label=left:{$nr(g_+)=1$}]      {$+$};
            \node[gate]     (vx)    at ($(v+)!0.5!(vin3)+(0,-1.5)$)   [label=left:{$nr(g_\times)=2$}] {$\times$};
            \node           (vout)  [below=0.5cm of vx]                 {$\vdots$};
            \draw[-stealth] (vin1)    --  (v+);
            \draw[-stealth] (vin2)    --  (v+);
            \draw[-stealth] (vin3)    --  (vx);
            \draw[-stealth] (v+)      --  (vx);
            \draw[-stealth] (vx)      --  (vout);
        \end{tikzpicture}
        \caption{Underlying circuit, to be simulated by the C-GNN ($nr(\ast)$ denotes the unique number of the gate).}
    \end{subfigure}\hspace{1em}%
    \begin{subfigure}[b]{0.45\textwidth}
        \centering
        \begin{tikzpicture}[
            gate/.style={circle, draw, minimum size =0.5cm}
            ]
            \node[gate]     (vin1)  [label=above:{$\ol{v}^{(1)}_{\mathrm{in}_1}=6$}]                      {$v_{\mathrm{in}_1}$};
            \node[gate]     (vin2)  [label=above:{$\ol{v}^{(1)}_{\mathrm{in}_2}=9$},right=4em of vin1]    {$v_{\mathrm{in}_2}$};
            \node[gate]     (vin3)  [label=above:{$\ol{v}^{(1)}_{\mathrm{in}_3}=5$},right=4em of vin2]    {$v_{\mathrm{in}_3}$};
            \node[gate]     (v+)    at ($(vin1)!0.5!(vin2)+(0,-1)$)  [label=left:{$\ol{v}^{(1)}_+=1$}]        {$v_+$};
            \node[gate]     (vx)    at ($(v+)!0.5!(vin3)+(0,-1.5)$)  [label=left:{$\ol{v}^{(1)}_\times = 2$}] {$v_\times$};
            \node           (vout)  [below=0.5cm of vx]                                                     {$\vdots$};
            \draw[-] (vin1)     --  (v+);
            \draw[-] (vin2)     --  (v+);
            \draw[-] (vin3)     --  (vx);
            \draw[-] (v+)       --  (vx);
            \draw[-] (vx)       --  (vout);
        \end{tikzpicture}
        \caption{The labeled graph and initial feature vectors of the C-GNN ($v_g^{(i)}$ denotes the feature vector of a gate at layer $i$).}
    \end{subfigure}
    \caption{Example illustrating the proof of Theorem~\ref{thm:circ-gnn-without-fnc}. \protect \footnotemark}

    \captionof{table}{Example illustrating the proof of Theorem~\ref{thm:circ-gnn-without-fnc}: The values of the feature vectors during the computation of the C-GNN.}
    \label{tbl:circ_to_cgnn_without_fnc}
    \vskip 0.15in
    \begin{center}
    \begin{small}
    \begin{sc}
    \begin{tabular}{cccccc}
    \toprule
    Layer & $\ol{v}_{\mathrm{in}_1}$ & $\ol{v}_{\mathrm{in}_2}$ & $\ol{v}_{\mathrm{in}_3}$ & $\ol{v}_+$ &$\ol{v}_\times$\\
    \midrule
    1   & $6$ & $9$ & $5$ & $1$        & $2$ \\
    2   & $6$ & $9$ & $5$ & $6+9=15$   & $2$ \\
    3   & $6$ & $9$ & $5$ & $15$       & $5\times 15 = 75$ \\
    \bottomrule
    \end{tabular}
    \end{sc}
    \end{small}
    \end{center}
    \vskip -0.1in
    \label{fig:circ_to_cgnn_no_fnc}
\end{figure}

Since we have shown in Section \ref{sec:sim_cgnn_with_circ} that C-GNNs can be simulated by $\facoA$-circuit families we now examine the converse direction: simulating these families with C-GNNs.
\begin{theorem*} \label{thm:circ-cgnn-fnc-in-circ}

    Let $\mathcal{C} = (C_n)_{n \in \N}$ be an $\mathrm{FAC}^0_{\R}[\mathcal{A}]$-circuit family.
    Then there exists a $(t\mathrm{FAC}^0_{\R}[\mathcal{A}], \{\mathrm{id}\})$-GNN $\NN$, such that for all $n \in \N$ and $\ol{x} \in \R^n$,
    the feature vectors of the nodes of $f_{\NN}\left(\lgc{C_n}\right)$ corresponding to output gates in $C_n$ are exactly 
    the components of $f_{C_n}(\ol{x})$ in the same order.
    The number of layers in $\NN$ is equal to the depth of $\mathcal{C}$.
\end{theorem*}
\begin{proof}
    The proof follows the same concept as the proof of Theorem \ref{thm:circ-gnn-without-fnc}.
    With the exception that the internal circuits of the C-GNN now have access to the same set of functions $\mathcal{A}$ in the form of gates that compute them.
    Those are then used to simulate gates from $\mathcal{C}$ that compute functions of $\mathcal{A}$.\looseness=-1
\end{proof}


In the previous two theorems, we have shown that C-GNNs can simulate $\facoR$-circuit families as is and that they can simulate $\facoAR$-circuit families if they have arbitrary access to the functions in $\mathcal{A}$.
If we consider $\mathcal{A}$ to consist of activation functions, though, it would be more natural to only permit them as such in our C-GNN model.
In the following we will introduce a structural restriction of circuits, which will allow us to capture a subset of $\facoAR$ using C-GNNs that have access to $\mathcal{A}$ only as activation functions.

\begin{definition}
    A circuit $C$ is said to be in \emph{function-layer form} if it is in path-length normal form and for each depth $d \leq \depth(C)$ all gates of $C$ at depth $d$ have the same gate type.
\end{definition}

\begin{definition}
    Let $\mathfrak{F}$ be a circuit function class.
    We denote by $f\mathfrak{F}$ the class which contains all functions of $\mathfrak{F}$ that can be computed by circuit families where all circuits are in function-layer form.
\end{definition}

We also need a small restriction on the functions we will permit as activation functions.

\begin{definition}
    A computable function $f \colon \R^k \to \R^k$ is \emph{countably injective} if there is a countably infinite set $S \subseteq \R^k$ such that $f\!\! \restriction_S$ is injective.
    We also say that $f$ is countably injective on $S$. \looseness=-1
\end{definition}

\begin{remark}
    The commonly used activation functions $\mathrm{ReLU}(x) = \max(0,x)$, $\sigma(x) = \frac{1}{1+ \mathrm{e}^{-x}}$ and $\mathrm{tanh}(x) = \frac{\mathrm{e}^x - \mathrm{e}^{-x}}{\mathrm{e}^x+\mathrm{e}^{-x}}$ are countably injective.
\end{remark}

\begin{theorem*} \label{thm:circ-gnn-fnc}
    Let $\mathcal{A}$ be a set of countably injective activation functions and let $\mathcal{C} = (C_n)_{n \in \N}$ be a $f\mathrm{FAC}^0_{\R}[\mathcal{A}]$-circuit family. 
    Then there exists a $(t\mathrm{FAC}^0_{\R}, \mathcal{A} \cup \{ \mathrm{id}\})$-GNN $\NN$, such that for all $n \in \N$ and $\ol{x} \in \R^n$,
    the feature vectors of the nodes of $f_{\NN}\left(\lgc{C_n}\right)$ corresponding to output gates in $C_n$ are exactly 
    the components of $f_{C_n}(\ol{x})$ in the same order.
    The number of layers in $\NN$ is equal to the depth of $\mathcal{C}$.\looseness=-1
\end{theorem*}

\begin{figure}[htb]
    \begin{subfigure}[b]{0.45\textwidth}
        \centering
    \begin{tikzpicture}[
        gate/.style={circle, draw, minimum size =0.5cm}
        ]
        \node[]         (vin1)                                      {$6$};
        \node[]         (vin2)  [right=of vin1]                     {$9$};
        \node[gate]     (vsigma)[below=0.5cm of vin1, label=left:{$nr(g_ \sigma)=1$}]      {$\sigma$};
        \node[gate]     (vsigma')[below=0.5 cm of vin2, label=right:{$nr(g'_ \sigma)=2$}]      {$\sigma$};
        \node[gate]     (v+)    at ($(vsigma)!0.5!(vsigma')+(0,-1)$)   [label=left:{$nr(g_+)=3$}]      {$+$};
        \node     (vout)  [below=0.5cm of v+]  {$\vdots$};
        \draw[-stealth] (vin1) -- (vsigma);
        \draw[-stealth] (vin2) -- (vsigma');
        \draw[-stealth] (vsigma)   -- (v+);
        \draw[-stealth] (vsigma') -- (v+);
        \draw[-stealth] (v+) -- (vout);
    \end{tikzpicture}
    \caption{Underlying circuit, to be simulated by the C-GNN ($nr(\ast)$ denotes the unique number of the gate).}
    \end{subfigure}\hspace{1em}%
    \begin{subfigure}[b]{0.45\textwidth}
        \centering
        \begin{tikzpicture}[
            gate/.style={circle, draw, minimum size =0.5cm}
            ]
            \node[gate]     (vin1)      [label=left:{$\ol{v}^{(1)}_{\mathrm{in}_1}=6$}]                           {$v_{\mathrm{in}_1}$};
            \node[gate]     (vin2)      [label=right:{$\ol{v}^{(1)}_{\mathrm{in}_2}=9$},right=4em of vin1]        {$v_{\mathrm{in}_2}$};
            \node[gate]     (vsigma)    [below=0.5cm of vin1, label=left:{$\ol{v}^{(1)}_+=1$}]                          {$v_\sigma$};
            \node[gate]     (vsigma')   [below=0.5cm of vin2, label=right:{$\ol{v}'^{(1)}_+=2$}]                        {$v_\sigma'$};
            \node[gate]     (v+)    at ($(vsigma)!0.5!(vsigma')+(0,-1)$)  [label=left:{$\ol{v}^{(1)}_+=3$}]     {$v_+$};
            \node     (vout)  [below=0.5cm of v+]  {$\vdots$};
            \draw[-] (vin1) -- (vsigma);
            \draw[-] (vin2) -- (vsigma');
        \draw[-] (vsigma') -- (v+);
        \draw[-] (vsigma) -- (v+);
        \draw[-] (v+) -- (vout);
        \end{tikzpicture}
        \caption{The labeled graph and initial feature vectors of the C-GNN ($v_g^{(i)}$ denotes the feature vector of a gate at layer $i$).}
    \end{subfigure}
    \caption{Example illustrating the proof of Theorem~\ref{thm:circ-gnn-fnc}. \protect \footnotemark[\value{footnote}]}
    \captionof{table}{Example illustrating the proof of Theorem~\ref{thm:circ-gnn-fnc}: The values of the feature vectors during the computation of the C-GNN.}
    \label{tbl:circ_to_cgnn_activation_fnc}
    \vskip 0.15in
    \begin{center}
    \begin{small}
    \begin{sc}
    \begin{tabular}{cccccc}
    \toprule
    Layer & $\ol{v}_{\mathrm{in}_1}$ & $\ol{v}_{\mathrm{in}_2}$ &  $\ol{v}_\sigma$ & $\ol{v}'_\sigma$ & $\ol{v}_+$\\
    \midrule
    1       & $6$           & $9$           & $1$           & $2$           & $3$                         \\
    2       & $\sigma(6)$   & $\sigma(9)$   & $\sigma(1)$   & $\sigma(9)$   & $\sigma(\sigma^{-1}(3))=3$  \\
    3       & $\sigma(6)$   & $\sigma(9)$   & $\sigma(1)$   & $\sigma(9)$   & $\sigma(6)+\sigma(9)$       \\
    \bottomrule
    \label{tbl:circ_to_cgnn_fnc}
    \end{tabular}
    \end{sc}
    \end{small}
    \end{center}
    \vskip -0.1in
    \label{fig:circ_to_cgnn_fnc}
\end{figure}
\begin{proof}
    Contrary to the proof of Theorem \ref{thm:circ-cgnn-fnc-in-circ} the gates in $\mathcal{C}$ computing a function $\sigma \in \mathcal{A}$ are simulated in the C-GNN by using the respective function as an activation function applied to all nodes. 
    To distinguish between nodes where this function should and should not be applied we assign $\sigma^{-1}$ of the respective feature vectors of nodes where we do not want to change the value.
    This is done prior to the application of the activation function.
    Therefore the function needs to be injective on some countable set.
    An example illustrating the proof idea is given in Figure \ref{fig:circ_to_cgnn_fnc} and Table \ref{tbl:circ_to_cgnn_fnc}.
    As in the proof of Theorem \ref{thm:circ-gnn-fnc} an example omitting any special cases is chosen.
\end{proof}

\footnotetext{{For readability purposes the one-dimensional vectors in the C-GNN are written as real numbers.}}

\section{Conclusion}
\label{sec:conclusion}

In this paper we showed a correspondence between arithmetic circuits and a generalization of graph neural networks using circuits.
However, some restrictions needed to be imposed on the particular circuits used in our constructions.
In particular, the circuits used in our C-GNNs need to be tail symmetric, and in Theorem~\ref{thm:circ-gnn-fnc} the corresponding $\facoAR$-circuit family needs to be in function-layer form as well.
An interesting avenue for further research is the question, whether those restrictions can be made ``on both sides in our simulations''.
This means, in particular, to ask if $(t\faco, \{id\} \cup \mathcal{A})$-GNNs can be simulated by $f\facoA$ families and, if $(tf\faco, \{id\} \cup \mathcal{A})$-GNNs and $f\facoA$ can simulate each other.
Another direction of further work is to study whether by relating our C-GNNs to variants of so-called VVc-GNNs of \citet{DBLP:conf/nips/SatoYK19}, we may drop the assumption of tail-symmetry.




We already mentioned the issue of uniformity. 
If the sequence of GNNs, one for each graph size, is uniform in the sense that there is an algorithm that, given the size of the graph as an input, outputs the GNN responsible for that size, then the sequence of arithmetic circuits will also be uniform, because our simulation proof explicitly shows how to construct the circuit; and vice versa. 
In future work, this should be made more precise. 
In the case of Boolean circuits, logtime-uniformity (U$_{E°*}$-uniformity) has become the standard requirement \citep{DBLP:books/daglib/0097931}.
What is the corresponding precise uniformity notion for GNNs? 

In the introduction we mentioned the complexity of the problem of training neural networks. 
We studied the expressiveness (or computational power) of neural networks. 
The higher the expressiveness of a network is, the more complicated the training process will be; however, the question to decide if a trained network of given quality exists might become easier to decide.
Can this be made precise? Is there a formal result connecting complexity of training and network expressiveness?

Further investigations are required to obtain practical implications of our theoretical results.
Once we fix the architecture of our C-GNNs (i.e., a circuit function class for the GNN nodes), we obtain a specific circuit class that characterizes the computational power of the C-GNN model which can be then rigorously studied. 
The question of what can be computed by arithmetic circuits using activation functions is related to the question of which functions can be computed using other functions, or in other words, which functions are "more complex" than others. 
We are only aware of very little work in this area. 
Results include a PTIME upper bound (in the so-called \emph{BSS}-model of computation) for the arithmetic circuit complexity class $\mathrm{NC}_\R$ that uses the sign function \cite{DBLP:journals/jc/Cucker92}.

As it currently stands, there does not seem to be meaningful experiments that could be run for our model of computation. 
The problem is that not much is known about the computational power of the circuit classes we utilize. 
Further research into functions computable by practically implementable arithmetic circuits of different sizes and depths would be required. 
This could e.g. be done by investigating the Boolean parts of different complexity classes defined by families of arithmetic circuits, i.e., the classes when restricted to Boolean inputs. 
This research would pinpoint properties that would be provably outside of the capabilities of particular GNN models, whose learnability could be then tested in practice. 
We believe that our results will motivate research for these circuit classes.

\citet{DBLP:journals/tacl/MerrillSS22,DBLP:journals/corr/abs-2207-00729}
have studied transformer networks from a computational complexity perspective and obtained a $\TCzero$ upper bound, similar to the result for GNNs by~\cite{DBLP:conf/lics/Grohe23}. 
We think it is worthwhile to study the computational power of transformers from the point of view of computation over the reals, similar to what we have done in this paper for GNNs.  

 \section*{Acknowledgements}
 The first and the fourth author were partially supported by the DFG grant VI 1045-1/1.

\bibliography{ref}

\begin{thebibliography}{24}
\providecommand{\natexlab}[1]{#1}
\providecommand{\url}[1]{\texttt{#1}}
\expandafter\ifx\csname urlstyle\endcsname\relax
  \providecommand{\doi}[1]{doi: #1}\else
  \providecommand{\doi}{doi: \begingroup \urlstyle{rm}\Url}\fi

\bibitem[Barcel{\'{o}} et~al.(2020)Barcel{\'{o}}, Kostylev, Monet, P{\'{e}}rez,
  Reutter, and Silva]{Barcelo_2020}
P.~Barcel{\'{o}}, E.~V. Kostylev, M.~Monet, J.~P{\'{e}}rez, J.~L. Reutter, and
  J.~P. Silva.
\newblock The {L}ogical {E}xpressiveness of {G}raph {N}eural {N}etworks.
\newblock In \emph{8th International Conference on Learning Representations,
  {ICLR} 2020, Addis Ababa, Ethiopia, April 26-30, 2020}, 2020.
\newblock URL \url{https://openreview.net/forum?id=r1lZ7AEKvB}.

\bibitem[Benedikt et~al.(2024)Benedikt, Lu, Motik, and
  Tan]{DBLP:conf/icalp/BenediktLMT24}
M.~Benedikt, C.~Lu, B.~Motik, and T.~Tan.
\newblock Decidability of graph neural networks via logical characterizations.
\newblock In K.~Bringmann, M.~Grohe, G.~Puppis, and O.~Svensson, editors,
  \emph{51st International Colloquium on Automata, Languages, and Programming,
  {ICALP} 2024, July 8-12, 2024, Tallinn, Estonia}, volume 297 of
  \emph{LIPIcs}, pages 127:1--127:20. Schloss Dagstuhl - Leibniz-Zentrum
  f{\"{u}}r Informatik, 2024.
\newblock \doi{10.4230/LIPICS.ICALP.2024.127}.
\newblock URL \url{https://doi.org/10.4230/LIPIcs.ICALP.2024.127}.

\bibitem[Blum et~al.(1997)Blum, Cucker, Shub, and
  Smale]{ComplexityAndRealComputation}
L.~Blum, F.~Cucker, M.~Shub, and S.~Smale.
\newblock \emph{Complexity and {R}eal {C}omputation}.
\newblock Springer-Verlag, Berlin, Heidelberg, 1997.
\newblock ISBN 0387982817.

\bibitem[Cucala et~al.(2023)Cucala, Grau, Motik, and
  Kostylev]{cucala2023correspondence}
D.~T. Cucala, B.~C. Grau, B.~Motik, and E.~V. Kostylev.
\newblock On the correspondence between monotonic max-sum gnns and datalog.
\newblock In P.~Marquis, T.~C. Son, and G.~Kern{-}Isberner, editors,
  \emph{Proceedings of the 20th International Conference on Principles of
  Knowledge Representation and Reasoning, {KR} 2023, Rhodes, Greece, September
  2-8, 2023}, pages 658--667, 2023.
\newblock \doi{10.24963/KR.2023/64}.
\newblock URL \url{https://doi.org/10.24963/kr.2023/64}.

\bibitem[Cucker(1992)]{DBLP:journals/jc/Cucker92}
F.~Cucker.
\newblock P\({}_{\mbox{ℝ}}\) {$\neq$} {NC}\({}_{\mbox{ℝ}}\).
\newblock \emph{J. Complexity}, 8\penalty0 (3):\penalty0 230--238, 1992.
\newblock URL \url{https://doi.org/10.1016/0885-064X(92)90024-6}.

\bibitem[Geerts et~al.(2022)Geerts, Steegmans, and {Van den
  Bussche}]{DBLP:conf/foiks/GeertsSB22}
F.~Geerts, J.~Steegmans, and J.~{Van den Bussche}.
\newblock On the {E}xpressive {P}ower of {M}essage-{P}assing {N}eural
  {N}etworks as {G}lobal {F}eature {M}ap {T}ransformers.
\newblock In I.~Varzinczak, editor, \emph{Foundations of Information and
  Knowledge Systems - 12th International Symposium, FoIKS 2022, Helsinki,
  Finland, June 20-23, 2022, Proceedings}, volume 13388 of \emph{Lecture Notes
  in Computer Science}, pages 20--34. Springer, 2022.
\newblock \doi{10.1007/978-3-031-11321-5\_2}.
\newblock URL \url{https://doi.org/10.1007/978-3-031-11321-5\_2}.

\bibitem[Gilmer et~al.(2017)Gilmer, Schoenholz, Riley, Vinyals, and
  Dahl]{gilmer2017neural}
J.~Gilmer, S.~S. Schoenholz, P.~F. Riley, O.~Vinyals, and G.~E. Dahl.
\newblock Neural message passing for quantum chemistry.
\newblock In \emph{International conference on machine learning}, pages
  1263--1272. PMLR, 2017.

\bibitem[Grohe(2023)]{DBLP:conf/lics/Grohe23}
M.~Grohe.
\newblock The {D}escriptive {C}omplexity of {G}raph {N}eural {N}etworks.
\newblock In \emph{{LICS}}, pages 1--14, 2023.
\newblock \doi{10.1109/LICS56636.2023.10175735}.
\newblock URL \url{https://doi.org/10.1109/LICS56636.2023.10175735}.

\bibitem[He and Papakonstantinou(2022)]{DBLP:journals/eccc/HeP22}
S.~He and P.~A. Papakonstantinou.
\newblock Deep {N}eural {N}etworks: {T}he {M}issing {C}omplexity {P}arameter.
\newblock \emph{Electron. Colloquium Comput. Complex.}, {TR22-159}, 2022.
\newblock URL \url{https://eccc.weizmann.ac.il/report/2022/159}.

\bibitem[Hella et~al.(2015)Hella, J{\"{a}}rvisalo, Kuusisto, Laurinharju,
  Lempi{\"{a}}inen, Luosto, Suomela, and
  Virtema]{DBLP:journals/dc/HellaJKLLLSV15}
L.~Hella, M.~J{\"{a}}rvisalo, A.~Kuusisto, J.~Laurinharju, T.~Lempi{\"{a}}inen,
  K.~Luosto, J.~Suomela, and J.~Virtema.
\newblock Weak models of distributed computing, with connections to modal
  logic.
\newblock \emph{Distributed Comput.}, 28\penalty0 (1):\penalty0 31--53, 2015.
\newblock \doi{10.1007/S00446-013-0202-3}.
\newblock URL \url{https://doi.org/10.1007/s00446-013-0202-3}.

\bibitem[Maass(1997)]{DBLP:journals/siamcomp/Maass97}
W.~Maass.
\newblock Bounds for the {C}omputational {P}ower and {L}earning {C}omplexity of
  {A}nalog {N}eural {N}ets.
\newblock \emph{{SIAM} J. Comput.}, 26\penalty0 (3):\penalty0 708--732, 1997.
\newblock \doi{10.1137/S0097539793256041}.
\newblock URL \url{https://doi.org/10.1137/S0097539793256041}.

\bibitem[Maass et~al.(1991)Maass, Schnitger, and
  Sontag]{DBLP:conf/focs/MaassSS91}
W.~Maass, G.~Schnitger, and E.~D. Sontag.
\newblock On the {C}omputational {P}ower of {S}igmoid versus {B}oolean
  {T}hreshold {C}ircuits.
\newblock In \emph{32nd Annual Symposium on Foundations of Computer Science,
  San Juan, Puerto Rico, 1-4 October 1991}, pages 767--776. {IEEE} Computer
  Society, 1991.
\newblock \doi{10.1109/SFCS.1991.185447}.
\newblock URL \url{https://doi.org/10.1109/SFCS.1991.185447}.

\bibitem[Merrill(2022)]{MerrillYouTube22}
W.~Merrill.
\newblock Transformers are {U}niform {C}onstant-{D}epth {T}hreshold {C}ircuits.
\newblock \url{https://www.youtube.com/watch?v=XF2UoA0oovg}, 2022.

\bibitem[Merrill and Sabharwal(2022)]{DBLP:journals/corr/abs-2207-00729}
W.~Merrill and A.~Sabharwal.
\newblock Log-{P}recision {T}ransformers are {C}onstant-{D}epth {U}niform
  {T}hreshold {C}ircuits.
\newblock \emph{CoRR}, abs/2207.00729, 2022.
\newblock \doi{10.48550/ARXIV.2207.00729}.
\newblock URL \url{https://doi.org/10.48550/arXiv.2207.00729}.

\bibitem[Merrill et~al.(2022)Merrill, Sabharwal, and
  Smith]{DBLP:journals/tacl/MerrillSS22}
W.~Merrill, A.~Sabharwal, and N.~A. Smith.
\newblock Saturated {T}ransformers are {C}onstant-{D}epth {T}hreshold
  {C}ircuits.
\newblock \emph{Trans. Assoc. Comput. Linguistics}, 10:\penalty0 843--856,
  2022.
\newblock URL \url{https://transacl.org/ojs/index.php/tacl/article/view/3465}.

\bibitem[Morris et~al.(2019)Morris, Ritzert, Fey, Hamilton, Lenssen, Rattan,
  and Grohe]{DBLP:conf/aaai/0001RFHLRG19}
C.~Morris, M.~Ritzert, M.~Fey, W.~L. Hamilton, J.~E. Lenssen, G.~Rattan, and
  M.~Grohe.
\newblock Weisfeiler and {L}eman {G}o {N}eural: {H}igher-{O}rder {G}raph
  {N}eural {N}etworks.
\newblock In \emph{The Thirty-Third {AAAI} Conference on Artificial
  Intelligence, {AAAI} 2019, The Thirty-First Innovative Applications of
  Artificial Intelligence Conference, {IAAI} 2019, The Ninth {AAAI} Symposium
  on Educational Advances in Artificial Intelligence, {EAAI} 2019, Honolulu,
  Hawaii, USA, January 27 - February 1, 2019}, pages 4602--4609. {AAAI} Press,
  2019.
\newblock \doi{10.1609/AAAI.V33I01.33014602}.
\newblock URL \url{https://doi.org/10.1609/aaai.v33i01.33014602}.

\bibitem[Morris et~al.(2021)Morris, Fey, and Kriege]{DBLP:conf/ijcai/0001FK21}
C.~Morris, M.~Fey, and N.~M. Kriege.
\newblock The {P}ower of the {W}eisfeiler-{L}eman {A}lgorithm for {M}achine
  {L}earning with {G}raphs.
\newblock In Z.~Zhou, editor, \emph{Proceedings of the Thirtieth International
  Joint Conference on Artificial Intelligence, {IJCAI} 2021, Virtual Event /
  Montreal, Canada, 19-27 August 2021}, pages 4543--4550. ijcai.org, 2021.
\newblock \doi{10.24963/IJCAI.2021/618}.
\newblock URL \url{https://doi.org/10.24963/ijcai.2021/618}.

\bibitem[Pfluger et~al.(2024)Pfluger, Cucala, and
  Kostylev]{pfluger2024recurrent}
M.~Pfluger, D.~T. Cucala, and E.~V. Kostylev.
\newblock Recurrent graph neural networks and their connections to bisimulation
  and logic.
\newblock In \emph{Proceedings of the AAAI Conference on Artificial
  Intelligence}, volume~38, pages 14608--14616, 2024.

\bibitem[Rosenbluth et~al.(2023)Rosenbluth, T{\"{o}}nshoff, and
  Grohe]{DBLP:journals/corr/abs-2302-11603}
E.~Rosenbluth, J.~T{\"{o}}nshoff, and M.~Grohe.
\newblock Some might say all you need is sum.
\newblock \emph{CoRR}, abs/2302.11603, 2023.
\newblock \doi{10.48550/ARXIV.2302.11603}.
\newblock URL \url{https://doi.org/10.48550/arXiv.2302.11603}.

\bibitem[Sato et~al.(2019)Sato, Yamada, and Kashima]{DBLP:conf/nips/SatoYK19}
R.~Sato, M.~Yamada, and H.~Kashima.
\newblock Approximation ratios of graph neural networks for combinatorial
  problems.
\newblock In H.~M. Wallach, H.~Larochelle, A.~Beygelzimer,
  F.~d'Alch{\'{e}}{-}Buc, E.~B. Fox, and R.~Garnett, editors, \emph{Advances in
  Neural Information Processing Systems 32: Annual Conference on Neural
  Information Processing Systems 2019, NeurIPS 2019, December 8-14, 2019,
  Vancouver, BC, Canada}, pages 4083--4092, 2019.
\newblock URL
  \url{https://proceedings.neurips.cc/paper/2019/hash/635440afdfc39fe37995fed127d7df4f-Abstract.html}.

\bibitem[Scarselli et~al.(2009)Scarselli, Gori, Tsoi, Hagenbuchner, and
  Monfardini]{DBLP:journals/tnn/ScarselliGTHM09}
F.~Scarselli, M.~Gori, A.~C. Tsoi, M.~Hagenbuchner, and G.~Monfardini.
\newblock The graph neural network model.
\newblock \emph{{IEEE} Trans. Neural Networks}, 20\penalty0 (1):\penalty0
  61--80, 2009.
\newblock \doi{10.1109/TNN.2008.2005605}.
\newblock URL \url{https://doi.org/10.1109/TNN.2008.2005605}.

\bibitem[Siegelmann and Sontag(1995)]{DBLP:journals/jcss/SiegelmannS95}
H.~T. Siegelmann and E.~D. Sontag.
\newblock On the {C}omputational {P}ower of {N}eural {N}ets.
\newblock \emph{J. Comput. Syst. Sci.}, 50\penalty0 (1):\penalty0 132--150,
  1995.
\newblock \doi{10.1006/JCSS.1995.1013}.
\newblock URL \url{https://doi.org/10.1006/jcss.1995.1013}.

\bibitem[Vollmer(1999)]{DBLP:books/daglib/0097931}
H.~Vollmer.
\newblock \emph{Introduction to {C}ircuit {C}omplexity - {A} {U}niform
  {A}pproach}.
\newblock Texts in Theoretical Computer Science. An {EATCS} Series. Springer,
  1999.
\newblock ISBN 978-3-540-64310-4.
\newblock \doi{10.1007/978-3-662-03927-4}.
\newblock URL \url{https://doi.org/10.1007/978-3-662-03927-4}.

\bibitem[Xu et~al.(2019)Xu, Hu, Leskovec, and Jegelka]{DBLP:conf/iclr/XuHLJ19}
K.~Xu, W.~Hu, J.~Leskovec, and S.~Jegelka.
\newblock How {P}owerful are {G}raph {N}eural {N}etworks?
\newblock In \emph{7th International Conference on Learning Representations,
  {ICLR} 2019, New Orleans, LA, USA, May 6-9, 2019}. OpenReview.net, 2019.
\newblock URL \url{https://openreview.net/forum?id=ryGs6iA5Km}.

\end{thebibliography}

\newpage
\appendix
\onecolumn
\section{Appendix}
The Appendix is organized as follows: Section \ref{app:prelim} extends the results on the correspondence of circuits over $\R^k$ and $\R$. Section \ref{sec:app_1} goes into the concepts of Section \ref{sec:model_of_comp} and contains a proof for the remark concerning the connection between AC-GNNs and C-GNNs. The other Sections \ref{sec:app_2} and \ref{sec:app_3} contain the full proofs for the theorems of Sections \ref{sec:sim_cgnn_with_circ} and \ref{sec:sim_circ_with_cgnn} in the paper, as well as additional required results.
\subsection{Proofs of Section \ref{sec:prelim}} \label{app:prelim}
The following Lemma shows in depth that $\R^k$ circuits can be simulated by $\R$ circuits.
\begin{lemma}\label{lem:rkcirc_to_rcirc}
    Let $k, n, m \in \N$ be arbitrary.
    For each arithmetic circuit $C$ over $\R^k$ with $n$ input and $m$ output gates, there exists an arithmetic circuit $C'$ over $\R$ with $n \cdot k$ input and $m \cdot k$ output gates, such that $\size(C') \leq k \cdot \size(C)$, $\depth(C') \leq \depth(C)$ and for all $\ol{x}, \ol{y} \in \R^k$
    \[f_C\left((x_{11}, \dots, x_{1k}), \dots, (x_{n1}, \dots, x_{nk})\right) =((y_{11}, \dots, y_{1k}), \dots, (y_{m1}, \dots, y_{nk}))\]
    if and only if 
    \[f_{C'}(x_{11}, \dots, x_{1k}, \dots, x_{n1}, \dots, x_{nk}) =(y_{11}, \dots, y_{1k}, \dots, y_{m1}, \dots, y_{mk}).\]    
\end{lemma}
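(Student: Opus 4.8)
The plan is to simulate $C$ gate by gate, representing each $\R^k$-gate of $C$ by a block of $k$ real-valued gates in $C'$, one for each coordinate. Concretely, I would fix the flattening that sends the $i$-th input $(x_{i1},\dots,x_{ik}) \in \R^k$ to the $k$ consecutive real inputs $x_{i1},\dots,x_{ik}$, and maintain as an invariant that for every gate $g$ of $C$ the block of $k$ gates $g^{(1)},\dots,g^{(k)}$ associated with it in $C'$ carries exactly the $k$ coordinates of the value computed at $g$. The coordinatewise gate types are immediate: an input gate becomes $k$ input gates, an output gate becomes $k$ output gates, a constant gate labeled $c=(c_1,\dots,c_k)$ becomes $k$ constant gates labeled $c_1,\dots,c_k$, and an addition (resp.\ multiplication) gate becomes $k$ addition (resp.\ multiplication) gates, where the $\ell$-th copy takes its inputs from the $\ell$-th gates of the blocks of the two predecessors. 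Since addition and multiplication on $\R^k$ are defined coordinatewise, these blocks compute the correct values whenever the predecessor blocks do.

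The one gate type that genuinely mixes coordinates, and hence the crux of the argument, is the projection gate $\proj[i,j]{}$, which maps $(x_1,\dots,x_k)$ to the vector that equals $x_i$ in coordinate $j$ and $0$ elsewhere. To simulate it, I would, in the block of the projection gate, let the $j$-th gate copy the value of the $i$-th gate of its predecessor's block---using a unary addition gate, which acts as the identity exactly as in the path-length normal form construction---and let each of the remaining $k-1$ gates be a constant gate with value $0$. Thus the projection block holds $(0,\dots,0,x_i,0,\dots,0)$ as required, using exactly $k$ gates. (Note that in the resulting $\R$-circuit there are no projection gates at all, consistent with Remark~\ref{rem:simu}, since genuine projection only appears as this coordinate-routing pattern.)

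Counting gates, every gate of $C$ is replaced by exactly $k$ gates of $C'$ in each of the cases above, so $\size(C') \le k \cdot \size(C)$, and the $n$ inputs and $m$ outputs become $nk$ and $mk$ gates as claimed. For depth, I would observe that the replacement is local and level-preserving: the block for a gate sits at the level of that gate, each edge $g \to h$ of $C$ lifts to edges between the corresponding blocks, and the only new sources are the constant-$0$ gates introduced for projections, which can only shorten paths. Hence every input-to-output path of $C'$ maps to an input-to-output path of $C$ of the same length, giving $\depth(C') \le \depth(C)$. Correctness then follows by induction on $\depth(g)$: the base case covers input and constant gates by construction, and the inductive step is the coordinatewise verification for addition and multiplication together with the defining equation of $\proj[i,j]{}$ for projection; reading off the $m$ output blocks yields the claimed equivalence. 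I expect the only real care to be needed in the projection case---getting both the target coordinate and the zeros right while staying within the $k$-gates-per-gate budget---since every other gate type is a routine coordinatewise copy.
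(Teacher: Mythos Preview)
Your proposal is correct and follows essentially the same approach as the paper: replace each $\R^k$-gate by a block of $k$ real-valued gates, handling the componentwise operations coordinatewise and realizing projections by one identity-like gate together with $k-1$ constant-$0$ gates. You supply considerably more detail (the explicit per-gate-type description, the induction on depth, and the observation that the new constant-$0$ sources can only shorten paths), but the construction and the size/depth accounting are the same.
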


\begin{proof}
    Each componentwise operation is replaced by the respective individual real operations. 
    Projections are realized by directly working with the projected value and disregarding the values corresponding to the other elements of the vectors and replacing them by constant $0$ gates. 
    This way, there are $k$ arithmetic gates in $C'$ for each arithmetic gate in $C$ and the depth of $C$ is not increased.
\end{proof}

Similarly, $\R$ circuits can be simulated by $\R^k$ circuits.

\begin{lemma}\label{lem:rcirc_to_rkcirc}
    Let $k, n, m \in \N$ be arbitrary.
    For each arithmetic circuit $C$ over $\R$ with $n \cdot k$ input and $m \cdot k$ output gates, there exists an arithmetic circuit $C'$ over $\R^k$ with $n$ input and $m$ output gates, such that $\size(C') \leq k \cdot (n + m) + m + \size(C)$, $\depth(C') \leq \depth(C) + 3$ and for all $\ol{x}, \ol{y} \in \R^k$
    \[        f_{C}(x_{11}, \dots, x_{1k}, \dots, x_{n1}, \dots, x_{nk}) =
         (y_{11}, \dots, y_{1k}, \dots, y_{m1}, \dots, y_{mk})
    \]
    if and only if 
    \[f_{C'}((x_{11}, \dots, x_{1k}), \dots, (x_{n1}, \dots, x_{nk})) =
        ((y_{11}, \dots, y_{1k}), \dots, (y_{m1}, \dots, y_{mk})).\]
\end{lemma}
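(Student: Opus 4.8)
The plan is to realize a single $\R^k$-coordinate as the carrier of one real value, so that the $nk$ reals read and the $mk$ reals produced by $C$ are \emph{packed} $k$ at a time into the $n$ input vectors and $m$ output vectors of $C'$. Concretely, I would keep every intermediate value of the simulation in the first coordinate of its $\R^k$ vector, with zeros elsewhere, and let each arithmetic gate of $C$ be simulated by the syntactically identical (addition/multiplication/constant) gate of $C'$. Because addition and multiplication in $\R^k$ act componentwise, two vectors of the form $(a,0,\dots,0)$ and $(b,0,\dots,0)$ add to $(a+b,0,\dots,0)$ and multiply to $(ab,0,\dots,0)$, so the first coordinate evolves exactly as the corresponding real value in $C$ while the remaining coordinates stay $0$; a real constant $c$ of $C$ becomes the $\R^k$ constant $(c,0,\dots,0)$. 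Three auxiliary layers surround this core simulation: one to unpack the inputs, and two to pack the outputs.

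For the unpacking layer, I would place, after each of the $n$ input gates of $C'$, the $k$ projection gates $\proj[i,1]{}$ for $i=1,\dots,k$; since $\proj[i,1]{}$ maps $(x_1,\dots,x_k)$ to $(x_i,0,\dots,0)$, these $nk$ gates produce exactly the $nk$ reals (each isolated in coordinate $1$) that serve as the inputs of the simulated $C$. The internal arithmetic and constant gates of $C$ are copied verbatim as just described, which accounts for $\size(C)-nk-mk$ gates. The correctness of this middle part is an immediate induction on gate depth using the componentwise identities above.

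To pack the outputs, the value $y_{j,\ell}$ (the $\ell$-th coordinate of the $j$-th output vector), currently sitting in coordinate $1$ of some vector, is first moved to coordinate $\ell$ by the projection $\proj[1,\ell]{}$, giving $mk$ such gates; then, for each $j$, the $k$ vectors $(0,\dots,y_{j,\ell},\dots,0)$ are summed into the single vector $(y_{j,1},\dots,y_{j,k})$ which feeds an output gate of $C'$. Counting $n$ input gates, $nk$ unpacking projections, $\size(C)-nk-mk$ internal gates, $mk$ packing projections, $m$ summation gates and $m$ output gates yields a total of $n+mk+\size(C)$, which is within the claimed bound $k(n+m)+m+\size(C)$. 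For the depth, the unpacking layer adds one edge, the internal simulation contributes $\depth(C)-1$ edges (the longest path of $C$ minus its final edge into an output gate), and the packing projection, the summation, and the final output gate add one edge each, for a total of $\depth(C)+3$.

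The one point that genuinely requires care --- and the only place the depth bound could slip --- is the packing sum: hitting $\depth(C)+3$ rather than $\depth(C)+2+\lceil\log_2 k\rceil$ relies on addition gates having unbounded fan-in, so that the $k$ coordinate-placed vectors for a given output can be combined in a single addition layer (this is consistent with the unbounded-fan-in regime underlying $\faco$ and with the unary and variable-arity addition gates used elsewhere in the paper). Finally, the stated equivalence is immediate once correctness is established: $C'$ deterministically computes the packing of $f_C$, so the displayed input--output relation holds in one formulation exactly when it holds in the other.
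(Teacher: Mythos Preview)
Your approach is essentially identical to the paper's: unpack each of the $n$ input vectors into $k$ first-coordinate scalars via $\proj[i,1]{}$, simulate $C$ gate-for-gate in the first coordinate, then repack via $\proj[1,\ell]{}$ and one unbounded fan-in addition per output. One minor slip: the gate counts you list actually sum to $n+\size(C)+2m$ rather than $n+mk+\size(C)$, but this value still lies within the claimed bound $k(n+m)+m+\size(C)$, so the argument stands.
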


\begin{proof}
    First project all tuple elements to the first component of each tuple (i.e. $(x_1, \dots, x_k) \mapsto (x_1, 0, \dots, 0), (x_2, 0, \dots, 0), \dots, (x_k, 0, \dots, 0)$), then compute as in the $\R^1$ circuit and finally project back when reaching the output gates.
    Projecting from the input gates requires $k$ projection gates in $C'$ for each input gate in $C$ and projecting back to the output gates requires $k$ projection gates and one addition gate, to put all projected values back into one vector.
    This results in a total overhead in size of $k \cdot (n + m) + m$ and a total overhead in depth of $3$.
\end{proof}

\begin{corollary}
\label{col:Rk_R_circ_equiv}
    For each arithmetic circuit family $\C$ over $\R$ of polynomial size and constant depth, there exists an arithmetic circuit family $\C'$ over $\R^k$ of polynomial size and constant depth such that for all $\ol{x}, \ol{y} \in \R^k$
    \[f_{\C}(x_{11}, \dots, x_{1k}, \dots, x_{n1}, \dots, x_{nk}) =
        (y_{11}, \dots, y_{1k}, \dots, y_{m1}, \dots, y_{mk})\]
    if and only if 
    \[f_{\C'}\left((x_{11}, \dots, x_{1k}), \dots, (x_{n1}, \dots, x_{nk})\right) =
        ((y_{11}, \dots, y_{1k}), \dots, (y_{m1}, \dots, y_{nk})).\]
    Respectively an analogous family over $\R$ exists for each family over $\R^k$. \looseness=-1
\end{corollary}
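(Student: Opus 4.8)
The plan is to lift the two per-circuit translation lemmas, Lemma~\ref{lem:rkcirc_to_rcirc} and Lemma~\ref{lem:rcirc_to_rkcirc}, to the level of families, checking in each case that polynomial size and constant depth are preserved. Since the corollary is a two-way claim, I would treat the two directions separately, each by applying the relevant lemma circuit-by-circuit and then bundling the results into a new family.

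For the direction from $\R$ to $\R^k$, start with an $\R$-circuit family $\C = (C_n)_{n \in \N}$ of polynomial size and constant depth. An input consisting of $n$ vectors of $\R^k$ corresponds, under the encoding used in the statement, to a string of $nk$ reals, so I would feed it to the circuit $C_{nk}$, which has exactly $nk$ input gates. Applying Lemma~\ref{lem:rcirc_to_rkcirc} to $C_{nk}$ yields an $\R^k$-circuit with $n$ input gates computing the same function up to the re-grouping of consecutive blocks of $k$ components into vectors; call it $C'_n$ and set $\C' \coloneqq (C'_n)_{n \in \N}$. The input/output correspondence is then exactly the iff guaranteed by the lemma. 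It remains to verify the resource bounds: Lemma~\ref{lem:rcirc_to_rkcirc} gives $\size(C'_n) \leq k(n+m) + m + \size(C_{nk})$ and $\depth(C'_n) \leq \depth(C_{nk}) + 3$. Since $k$ is a fixed constant and $\size(C_{nk})$ is polynomial in $nk$ and hence polynomial in $n$, and since the number of output gates $m$ is bounded by $\size(C_{nk})$, the size of $C'_n$ stays polynomial in $n$; the depth stays constant because adding the fixed constant $3$ to a constant is constant.

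For the reverse direction, start with an $\R^k$-circuit family $(C_n)_{n \in \N}$ and apply Lemma~\ref{lem:rkcirc_to_rcirc} to each $C_n$, obtaining an $\R$-circuit with $nk$ input gates of size $\leq k \cdot \size(C_n)$ and depth $\leq \depth(C_n)$. I would place this circuit at index $nk$ of the new $\R$-family; the multiplicative factor $k$ preserves polynomiality and the depth bound preserves constant depth. The only loose end here is that a circuit family must specify a circuit for every input length, whereas the construction only produces circuits at lengths that are multiples of $k$; I would fill the remaining indices with trivial (e.g.\ constant-output) circuits, which affects neither the polynomial size bound nor the constant depth, and which are irrelevant to correctness since the claimed equivalence only concerns inputs whose length is a multiple of $k$.

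The routine calculations are harmless, so I expect the only real care to go into the index bookkeeping: aligning the $n$ input vectors of an $\R^k$-circuit with the $nk$ real inputs of the corresponding $\R$-circuit, choosing the indices of the target family correctly (shifting by the factor $k$), and handling input lengths not divisible by $k$ on the $\R$ side. The one quantitative point worth stating explicitly is that the additive overhead $k(n+m)+m$ in Lemma~\ref{lem:rcirc_to_rkcirc} remains polynomial, which rests on the observation that $m \leq \size(C_{nk})$, so no genuinely new estimate is needed beyond what the two lemmas already provide.
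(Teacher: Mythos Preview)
Your proposal is correct and matches the paper's approach: the corollary is stated without proof, as an immediate consequence of Lemmas~\ref{lem:rkcirc_to_rcirc} and~\ref{lem:rcirc_to_rkcirc}, and what you do is precisely spell out how those per-circuit bounds lift to families. The extra bookkeeping you add (selecting $C_{nk}$ on the $\R$ side, bounding $m$ by $\size(C_{nk})$, and padding the non-multiple-of-$k$ indices with trivial circuits) is a careful elaboration of details the paper leaves implicit.
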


\subsection{Proofs of Section \ref{sec:model_of_comp}}
\label{sec:app_1}
As mentioned in Remark \ref{rem:relation_ac_gnn_c_gnn} our considered model of C-GNNs relates to the classical AC-GNNs in the following way. 
The definition of an AC-GNN does not fix the complexity of its aggregate and combine functions, hence to be able to compare the two models we need to restrict both functions to be of a specific circuit function class. This restriction holds for most practically used functions like aggregating via the weighted sum and combining via summation of the previous feature vector and the aggregate, each multiplied by a parameter matrix.  
We now proceed to prove this. \medskip

\begin{customrem}{\ref{rem:relation_ac_gnn_c_gnn}}
    We can relate our C-GNN model to the continuous computation of traditional AC-GNNs.
    For any AC-GNN following Definition~\ref{def:ac_gnn} with activation functions $\mathcal{A}$ where the aggregation functions are computable by $\faco$-circuit families, there exists a $(\faco, \mathcal{A})$-GNN with a constant number of layers which computes the same function, omitting only the functionality of the classification function.
    This holds for the common aggregation functions like the sum, product or mean.
\end{customrem}
\begin{proof}
    Let $\mathcal{D}=(\{ \text{AGG}^{(i)}\}_{i=1}^L, \{ \text{COM}^{(i)} \}_{i=1}^L, \{ \sigma^{(i)} \}_{i=1}^L, \allowbreak\text{CLS})$ be an arbitrary AC-GNN as defined in Definition \ref{def:ac_gnn}. 
    A $(\faco, \mathcal{A})$-GNN computing the same function can be realized by using $\faco$-circuit families which compute the functions in $\{ \text{AGG}^{(i)}\}_{i=1}^L$ and $\{ \text{COM}^{(i)} \}_{i=1}^L$. 
    To compute a permutation-invariant aggregation function a tail-symmetric circuit family $\mathcal{C}_{\text{AGG}}$ computing a function $f_{\mathcal{C}_{\text{AGG}}}=\text{AGG}^{(i)}$ as described in Definition \ref{def:tail_symmetric_fnc} is used.
    The computation of a combine function can also be realised by using $\faco$- circuit families computing a function $f_{\mathcal{C}_{\text{COMB}}}=\text{COMB}^{(i)}$.
    Afterwards the application of the activation function $\sigma^{(i)}$ can be achieved by adding $\sigma^{(i)}$ to the set $\mathcal{A}$ and denoting it as the activation function for layer $i$ which is applied to all feature vectors after the circuit computations of these layers are done.
    As said in the beginning we are not concerned with classification in this paper, so we do not aim to simulate the transition from the continuous to the discrete as performed in the classification function of AC-GNNs.
\end{proof}

\subsection{Proofs of Section \ref{sec:sim_cgnn_with_circ}}
\label{sec:app_2}
The proof of Theorem \ref{thm:cgnn_to_circ} requires us to be able to distinguish the feature vectors of the neighborhood of a given node. 
This can be achieved by essentially zeroing out all non-neighbors as follows.
\begin{lemma}
    \label{lem:nn_nodes_zero}
    For each $k,n,i \in \N$ where $1 \leq i \leq n$ there exists an $\faco$-circuit family $\mathcal{C}$ such that for all $\mathfrak{G} = (V, E, g_V) \in \Graph_k$ with $V = \{v_1, \dots, v_n\}$ it holds that
    \[
        f_\mathcal{C}(\enc{\mathfrak{G}}) = \enc{\text{vec}(\lgraph)'}
    \]
    where $\enc{\mathfrak{G}} = (\enc{\text{adj}(\lgraph)}, \enc{\text{vec}(\lgraph)})$ and for all $1 \leq j \leq n$
    \[
        \text{vec}(\lgraph)'_j = 
        \begin{cases}
            \text{vec}(\lgraph)_j, &\text{ if $\{v_i, v_j\} \in E$} \\
            \ol{0}, &\text{otherwise}
        \end{cases}
    \]
    where $\text{vec}(\lgraph)'_j$ and $\text{vec}(\lgraph)_j$ are the respective $j$th vectors of $\text{vec}(\lgraph)'$ and $\text{vec}(\lgraph)$.
\end{lemma}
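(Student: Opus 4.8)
The plan is to exploit the fact that the adjacency information is already encoded using exactly the two vectors $(1,\dots,1)$ and $(0,\dots,0)$, which act as a multiplicative mask. First I would observe that, for the fixed index $i$, the entry $\ol{m}_{ij}$ of $\enc{\text{adj}(\lgraph)}$ equals $(1,\dots,1)$ precisely when $\{v_i,v_j\}\in E$ and equals $(0,\dots,0)$ otherwise. Hence the componentwise product $\ol{m}_{ij}\cdot\text{vec}(\lgraph)_j$ is exactly $\text{vec}(\lgraph)_j$ when $v_j$ is a neighbor of $v_i$ and is $\ol{0}$ otherwise, which is precisely the target value $\text{vec}(\lgraph)'_j$. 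Thus a single multiplication gate per output vector suffices, and no projection or constant gates are needed.

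Next I would make the wiring explicit. On input length $N=n^2+n$ the value $n$ is recoverable, so the circuit $C_N$ knows the layout of $\enc{\lgraph}$: the matrix entry $\ol{m}_{ab}$ sits at input position $(a-1)n+b$ and the feature vector $\text{vec}(\lgraph)_j$ at position $n^2+j$. For each $j\in[n]$ I feed the input gate at position $(i-1)n+j$ (carrying $\ol{m}_{ij}$) together with the input gate at position $n^2+j$ (carrying $\text{vec}(\lgraph)_j$) into one multiplication gate, and route its value to the $j$-th output gate. This yields a circuit with $n$ multiplication gates arranged in a single layer, producing the $n$ vectors that constitute $\enc{\text{vec}(\lgraph)'}$.

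For the resource bounds, the depth is constant (in fact $1$) and the size is $\bO(n)$, hence polynomial in the input length $N$; therefore the resulting family lies in \faco. The family is indexed by $i$ (and $k$), and for every admissible size $n\geq i$ the circuit $C_{n^2+n}$ behaves as required, so the statement holds for all labeled graphs $\lgraph$ of the relevant size. Since $C_N$ is generated from $N$ by a trivial algorithm that computes $n$ and emits the fixed wiring pattern, uniformity is preserved.

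There is no substantial obstacle here: the construction is essentially a masking trick made possible by the encoding of the adjacency matrix over the two vectors $(0,\dots,0)$ and $(1,\dots,1)$. The only points requiring care are the bookkeeping of the input positions in $\enc{\lgraph}$, so that the correct adjacency entry $\ol{m}_{ij}$ is paired with the matching feature vector $\text{vec}(\lgraph)_j$, and confirming that the componentwise multiplication gate of Definition~\ref{def:circuit} performs exactly the intended masking.
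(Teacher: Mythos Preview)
Your proposal is correct and uses essentially the same approach as the paper: both exploit that $\ol{m}_{ij}\in\{(0,\dots,0),(1,\dots,1)\}$ and compute the componentwise product $\ol{m}_{ij}\cdot \text{vec}(\lgraph)_j$ as a mask. Your write-up is in fact more detailed than the paper's, spelling out the input positions and the explicit size/depth bounds, but the underlying idea is identical.
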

\begin{proof}
    We write $v_i$ for the node $v$ which has number $i$ and $\ol{v}_i$ for its feature vector.
    Since the vectors in $\enc{M}$ with $M=\text{adj}(\lgraph)$ are ordered and the vectors $\ol{m}_{ij}$ for $1 \leq j \leq \lvert V \rvert$ correspond to one row in the adjacency matrix, row $\ol{m}_{ij}$ describes all edge relations of $v_i$.
    Given the ordering on $\text{vec}(\lgraph)$ we know that the $i$th vector in $\text{vec}(\lgraph)$ is the vector $\ol{v}_i$.
    We multiply each vector $\ol{m}_{ij}$ with $\ol{v}_j$, resulting in
    \[
    \ol{m}_{ij}\ol{v}_j = \begin{cases}
        \ol{v}_j, &\text{if } (v_i, v_j) \in E\\
        \ol{0}, &\text{otherwise}
    \end{cases}.
    \]
    We continue to work with these values.
    This sets all feature vectors of non-neighboring nodes of $v_i$ to $\ol{0}$.
\end{proof}

We now proceed to prove Theorem~\ref{thm:cgnn_to_circ}.
A simplified illustration of the proof is given in Figure \ref{fig:cgnn_to_circ_idea}.

\begin{customthm}{\ref{thm:cgnn_to_circ}}
    Let $\NN$ be a $(t\facoA, \{\mathrm{id}\})$-GNN. 
    Then there exists an $\facoA$-circuit family $\mathcal{C}$, such that for all labeled graphs $\lgraph$ the following holds:
    \[
        f_\NN(\lgraph) = \mathfrak{G'} \text{ iff } \mathcal{C}(\enc{\lgraph}) = \enc{\mathfrak{G'}}
    \]
    where $\enc{\lgraph} = (\enc{\text{adj}(\lgraph)}, \enc{\text{vec}(\lgraph)})$.
\end{customthm}

\begin{proof}
    We would like to mimic each of the (constantly many) C-GNN layers with a circuit of constant depth.
    To do this we need to retrieve the correct circuit from the circuit families of our C-GNN for each layer $i$ and for each possible size of the neighborhood for a node $v \in V$. 
    These values are bounded by the number of layers $d$ of the C-GNN and the size $n = \lvert V \rvert$ of the input labeled graph $\lgraph = (V, E, g_V)$.

    Since $\lgraph$ has $n$ nodes, our encoding $\enc{\lgraph}$ consists of $n^2$ vectors for the encoding of the adjacency matrix $M$ and the feature vectors of the $n$ nodes. 
    We construct a circuit family $\mathcal{K}$ which consists of a series of circuits for any possible number of nodes $n$. 
    
    While constructing a circuit $K$ from $\mathcal{K}$ we use $\mathcal{C}^{(i)} = \proj[1]{\left(\NN(i)\right)}$ (see Definition \ref{def:cgnn_fnc}) to retrieve the circuit family of layer $i$ of the \CGNN.
    The first observation we make is that if a node $v$ has $k$ neighbors, we need to consider the $k+1$st circuit from the circuit family for the computation, as this circuit gets all of the feature vectors of the neighbors of $v$ and the feature vector of $v$ as input.
    We also observe that, since our circuits are tail-symmetric, the order of the neighbors' feature vectors does not matter.
    We can figure out how many neighbors $v$ has by summing over $v$'s row in the adjacency matrix of $\lgraph$.

    The circuit $K$ works for a node $v$ in the following way.
    When given the initial feature vectors and the encoding of the adjacency matrix $M$ of $\enc{\lgraph}$ as input it first computes the size $k$ of the neighborhood of $v$. 
    It then sets the feature vectors of non-neighboring nodes of $v$ to $\ol{0}$ via the functionality described in Lemma \ref{lem:nn_nodes_zero}.
    Afterwards it orders the feature vectors of all nodes in $\lgraph$ such that the initial order remains the same, but all $\ol{0}$ vectors get pushed to the end.

    We can achieve such an ordering by essentially using the idea of the counting sort algorithm. 
    We start by zeroing the non-neighbors, and then do a pairwise comparison with all nodes in the following way: 
    When we compare two nodes $v_p$ and $v_q$ where $v_p$ is a neighbor of $v$ and $v_q$ is not, then $v_p > v_q$ and if neither or both are neighbors of $v$, then $v_p > v_q$ iff $p > q$.
    We can do these kinds of comparisons, since we know that the comparison only needs to work for all numbers $\leq n-1$ in $\N_0$ and we can thus create a polynomial that works like the sign function for all these values. 
    We then count (i.e. sum up) the number of comparison victories for each node and use equality checks (again, for numbers $\leq n-1$ in $\N_0$) to make sure the right vector is put in the right place.

    Afterwards all circuits $C^{(i)}_j$ from the circuit family $\mathcal{C}^{(i)}$ for $j \in \{1, \dots, n\}$ are simulated in parallel. 
    Since the circuits $C^{(i)}_j$ are tail-symmetric, it does not matter that we changed the order of the vectors.
    Moreover since at most $k+1$ vectors are nonzero after we zeroed the non-neighbors, they are in the first $k+1$ vectors in our new ordering.
    Each of these simulations is followed by an equality check on $j$ and $k+1$.
    The sum is then formed over these which ensures that only the results of the circuit simulation for the correct number of neighbors of $v$ is considered further.
    
    We repeat the described computations for all nodes $v \in V$ for every layer $0\leq i \leq d$, using the resulting feature vectors from the computations before as input to the following ones. 
    Now we have the following:
    \begin{itemize}
        \item The circuit $K$ has constant depth.
        \item The computations for one layer are done in parallel for every node.
        \item The number of layers is a constant given by the C-GNN.
        \item As $\NN$ is a $(t\facoA, \{\mathrm{id}\})$-GNN the simulation of its circuit families are also $t\facoA$-circuit families.
        \item The family of subcircuit described in Lemma \ref{lem:nn_nodes_zero} is also an $\faco$-circuit family. 
    \end{itemize}
    Therefore $\mathcal{K}$ is an $\facoA$-circuit family.
\end{proof}
\begin{figure}[h!]
        \centering
        \begin{tikzpicture}[
            every node/.style={minimum size =1cm, outer sep=0pt}, every path/.style={-stealth}
            ]
            \node[]     (M)                         {$\enc{M}$};
            \node[]     (v1)    [right=of M]        {$\ol{v}_1^{(1)}$};
            \node[]     (dots)  [right= of v1]      {$\dots$};
            \node[]     (vn)    [right= of dots]    {$\ol{v}_n^{(1)}$};
            
            \node[rectangle, draw, outer sep=0pt]       (n)     [fit=(M)(vn), below=4em of M.south west, anchor= west, inner sep=0] {determine neighborhood relations};
            
            \node[]     (sc)    [below =6em of n, fit=(n)]      {simulate the correct circuit from $\mathcal{C}^{(1)}$}; 
            \node[rectangle, draw, outer sep=0pt, minimum size =2cm]     (c)     [below =0.25em of sc]   {$C^{(1)}_k$};  
            \node[rectangle, draw, outer sep =0pt]      (box)   [fit={(sc) (c)}]    {};
            \node[]     (v1')   [below=of box]  {$\ol{v}_1^{(2)}$};

            \draw[]   (M.south)       --      (M.south |- n.north);
            \draw[]   (v1.south)      --      (v1.south |- n.north);
            \draw[]   (vn.south)      --      (vn.south |- n.north);
            \draw[]   ([xshift=-1cm]n.south)       --      ([xshift=-1cm]n.south  |- box.north) node[midway, left] {$\ol{v}_1^{(1)}$};
            \draw[]   ([xshift=0.5cm]n.south)       --     ([xshift=0.5cm]n.south  |- box.north) node[midway, left] {$\ol{u}_1^{(1)}$};
            \draw[]   ([xshift=2.5cm]n.south)       --       ([xshift=2.5cm]n.south  |- box.north) node[midway, right] {$\ol{u}_k^{(1)}$}; 
            \draw[draw=none]    ([xshift=1.5cm]n.south)       --      ([xshift=1.5cm]n.south  |- box.north) node[midway] {$\dots$};
            \draw[]   (box.south)     --  (v1');
        \end{tikzpicture}
        \caption{Proof of Theorem \ref{thm:cgnn_to_circ}, where $\ol{v}_i$ are the feature vectors of the C-GNN, $\ol{u}_j$ the feature vectors of the neighbors of a node. }
        \label{fig:cgnn_to_circ_idea}
    \end{figure}
    Other types of C-GNNs, mainly $(t\faco, \{\mathrm{id}\} \cup \mathcal{A})$-GNNs and $(t\faco, \{\mathrm{id}\})$-GNNs are also introduced in this paper. 
    Restricting Theorem \ref{thm:cgnn_to_circ} to $(t\facoA, \{\mathrm{id}\})$-GNNs is not a restraint on generality as the following corollary shows.
    \begin{corollary}
        Let $\mathcal{A}$ be a set of activation functions.
        Then $\facoA$-circuit families can simulate $(t\faco, \{\mathrm{id}\} \cup \mathcal{A})$-GNNs and $(t\faco, \{\mathrm{id}\})$-GNNs in the sense of Theorem~\ref{thm:cgnn_to_circ}.
    \end{corollary}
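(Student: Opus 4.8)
The plan is to reduce both claims to Theorem~\ref{thm:cgnn_to_circ} by showing that each of the two network types can be turned into an equivalent $(t\facoA, \{\mathrm{id}\})$-GNN computing exactly the same graph-to-graph function; the corollary then follows immediately by applying the theorem to this equivalent network.

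The case of $(t\faco, \{\mathrm{id}\})$-GNNs is essentially free. I would first observe that every $\faco$-circuit family is trivially also an $\facoA$-circuit family, since the extra gate types available in $\facoA$ are simply never used; consequently $t\faco \subseteq t\facoA$, while the activation set $\{\mathrm{id}\}$ is unchanged. Hence a $(t\faco, \{\mathrm{id}\})$-GNN already is, verbatim, a $(t\facoA, \{\mathrm{id}\})$-GNN, and Theorem~\ref{thm:cgnn_to_circ} applies to it directly.

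For $(t\faco, \{\mathrm{id}\} \cup \mathcal{A})$-GNNs the idea is to absorb each activation function into the circuit of its layer. Recall from Definition~\ref{def:cgnn_fnc} that layer $i$ computes $\actfct^{(i)}(f_{\mathcal{C}^{(i)}}(\cdots))$ with $\actfct^{(i)} \in \{\mathrm{id}\} \cup \mathcal{A}$. When $\actfct^{(i)} = \mathrm{id}$ I leave the layer untouched; when $\actfct^{(i)} \in \mathcal{A}$ I build a circuit family $\widehat{\mathcal{C}}^{(i)}$ from $\mathcal{C}^{(i)}$ by inserting, immediately before the output gate of each circuit, a single gate of type $g_{\actfct^{(i)}}$ (which is available precisely because the target class is now $\facoA$), and then I set the activation of the modified layer to $\mathrm{id}$. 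The modified layer computes $\mathrm{id}\big(\actfct^{(i)}(f_{\mathcal{C}^{(i)}}(\cdots))\big) = \actfct^{(i)}(f_{\mathcal{C}^{(i)}}(\cdots))$, matching the original, and only the $d$ layers of the network are affected.

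Two points remain to be verified, and I expect the tail-symmetry check to be the only one needing a word of care. First, $\widehat{\mathcal{C}}^{(i)}$ must still lie in $t\facoA$: post-composing a tail-symmetric function with the componentwise $\actfct^{(i)}$ preserves tail-symmetry, since $\actfct^{(i)}(f(\ol{x}_1, \pi(\ol{x}_2, \dots, \ol{x}_n))) = \actfct^{(i)}(f(\ol{x}_1, \dots, \ol{x}_n))$ for every permutation $\pi$ by Definition~\ref{def:tail_symmetric_fnc}, and inserting one gate per output changes depth and size only by additive constants, so the family stays constant-depth and polynomial-size. Second, the construction is purely local and effective, so the resulting object is a genuine $(t\facoA, \{\mathrm{id}\})$-GNN and uniformity is preserved. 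Applying Theorem~\ref{thm:cgnn_to_circ} to it then yields the desired $\facoA$-circuit family.
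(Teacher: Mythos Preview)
Your proposal is correct and follows the same approach as the paper: both argue that the two weaker GNN types are special cases of $(t\facoA, \{\mathrm{id}\})$-GNNs and then invoke Theorem~\ref{thm:cgnn_to_circ}. Your version is in fact more explicit than the paper's, which merely asserts that $(t\facoA, \{\mathrm{id}\})$-GNNs are ``less restricted'' without spelling out the absorption of the activation into the layer circuit or the tail-symmetry check.
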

    \begin{proof}
        As $(t\facoA, \{\mathrm{id}\})$-GNNs have access to the functions in $\mathcal{A}$ at any point during the computations in a layer they are less restricted than $(t\faco, \{\mathrm{id}\} \cup \mathcal{A})$-GNNs and $(t\faco, \{\mathrm{id}\})$-GNNs, since those can either only apply the functions from $\mathcal{A}$ as activation functions to all nodes at the end of a layers computation or do not have access to them at all. 
        This is especially relevant for functions which cannot be computed by $t\faco$-circuit families.
    \end{proof}
    
\subsection{Proofs of Section \ref{sec:sim_circ_with_cgnn}}
\label{sec:app_3}
    The complete proofs of Section \ref{sec:sim_circ_with_cgnn} make use of the following lemma which allows us to 
    distinguish individual vectors from a finite subset of $\R^k$ while still following our desired complexity constraints.
    \begin{lemma}
        Let $A \subseteq \R^k$ be finite and let $\ol{a} \in A$.
        Then the function $\chi_{A, \ol{a}} \colon A \to \{\ol{0}, \ol{1}\}$ defined as
        \[
            \chi_{A, \ol{a}}(\ol{x}) \coloneqq 
            \begin{cases}
                \ol{1}, \textnormal{ if $\ol{x} = \ol{a}$}\\
                \ol{0}, \textnormal{ otherwise}
            \end{cases}
        \]
        where $\ol{1}$ is the $k$-dimensional vector of ones, respectively $\ol{0}$ the $k$-dimensional vector of zeros, is computable by a $\R^k$ circuit whose size only depends on $\lvert A \rvert$ and whose depth is constant.
    \label{lem:chiA}
    \end{lemma}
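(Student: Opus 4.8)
The plan is to realize $\chi_{A,\ol{a}}$ as a single product of $\lvert A\rvert - 1$ ``selector'' factors, one for each $\ol{b}\in A\setminus\{\ol{a}\}$, where the factor $F_{\ol{b}}$ evaluates to $\ol{1}$ at $\ol{a}$ and to $\ol{0}$ at $\ol{b}$. Because the model places no bound on the fan-in of addition and multiplication gates, the entire product can be formed by one multiplication gate, so the combination step contributes only constant depth; it then remains to check that each factor is cheap to compute and that the componentwise product behaves correctly on all of $A$.

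For a fixed $\ol{b}\ne\ol{a}$, distinctness of the two vectors gives a coordinate $j$ with $a_j\ne b_j$. First I would \emph{broadcast} this coordinate of the input into every position: using the projection gates $\proj[j,1]{\ol{x}},\dots,\proj[j,k]{\ol{x}}$ followed by a single addition gate yields the vector $(x_j,\dots,x_j)$. I then normalise it affinely with constant gates, setting $F_{\ol{b}}(\ol{x}) := (x_j-b_j,\dots,x_j-b_j)\cdot\ol{c}$, where $\ol{c}=(1/(a_j-b_j),\dots,1/(a_j-b_j))$ and both the subtraction and the scaling by $\ol{c}$ are the componentwise operations of the circuit; the constant $1/(a_j-b_j)$ is well defined precisely because $a_j\ne b_j$, and no division gate is needed since constant gates may carry arbitrary reals. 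By construction $F_{\ol{b}}(\ol{a})=\ol{1}$ and $F_{\ol{b}}(\ol{b})=\ol{0}$. Taking $\chi_{A,\ol{a}}(\ol{x}):=\prod_{\ol{b}\ne\ol{a}}F_{\ol{b}}(\ol{x})$ componentwise, at $\ol{x}=\ol{a}$ every factor equals $\ol{1}$ so the product is $\ol{1}$, whereas for any $\ol{x}=\ol{c}\in A\setminus\{\ol{a}\}$ the factor $F_{\ol{c}}$ is the all-zero vector, which forces the product to $\ol{0}$ in every coordinate irrespective of the other factors. This agrees with $\chi_{A,\ol{a}}$ on all of $A$.

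For the resource bounds, each factor uses $k$ projection gates, one addition gate, and a constant number of gates for the affine normalisation, and there are $\lvert A\rvert-1$ factors plus the single final multiplication gate; hence, for the fixed dimension $k$, the size is polynomial in $\lvert A\rvert$ (and in particular independent of the input length), while the depth is a fixed constant. The main point to get right is the componentwise nature of the model: a difference in one coordinate must be converted into an all-ones-or-all-zeros vector, which is exactly why the broadcast step (projection into every position, then summation) is essential — a lone projection would leave zeros in the remaining coordinates and wrongly pin the product to $\ol{0}$ even at $\ol{a}$. Alternatively, the same result can be obtained over $\R$ via Corollary~\ref{col:Rk_R_circ_equiv}, replacing each $F_{\ol{b}}$ by a one-dimensional Lagrange indicator on the projected finite set, but the direct $\R^k$ construction avoids the translation overhead.
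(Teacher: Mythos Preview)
Your proof is correct. Both you and the paper realize $\chi_{A,\ol{a}}$ as a product of $\lvert A\rvert-1$ affine ``selector'' factors, one per $\ol{b}\in A\setminus\{\ol{a}\}$, each vanishing at $\ol{b}$ and normalized to give $\ol{1}$ at $\ol{a}$; the paper writes this directly as the componentwise Lagrange polynomial $p_{A,\ol{a}}(\ol{x})=\bigl(\prod_i(\ol{a}_i-\ol{x})\bigr)\cdot\ol{1}/\prod_i(\ol{a}_i-\ol{a})$, with no projection or broadcast step. Your construction differs in one substantive way: before forming each factor you broadcast a single distinguishing coordinate $x_j$ into all $k$ positions via projections. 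This costs $k$ extra gates per factor but buys correctness in a case the paper's argument glosses over: when some $\ol{a}_i$ agrees with $\ol{a}$ in a component (perfectly possible for $k>1$), the paper's componentwise normalizing constant $\ol{1}/\prod_i(\ol{a}_i-\ol{a})$ involves a division by zero in that component, whereas your per-factor normalization $1/(a_j-b_j)$ is well-defined by the choice of $j$. So your version is the more robust implementation of the same idea; for $k=1$ (which is how the lemma is actually invoked later in the paper) the two constructions coincide.
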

    \begin{proof}
        Let $A = \{\ol{a}, \ol{a}_1, \dots, \ol{a}_\ell\} \subseteq \R^k$ be a finite set and let the univariate polynomial $p_{A, \ol{a}} \colon A \to \R^k$ be defined as follows:
        \[
            p_{A, \ol{a}}(\ol{x}) \coloneqq \left( \prod_{1 \leq i \leq \ell} (\ol{a}_i - \ol{x}) \right) \cdot \frac{\ol{1}}{\prod_{1 \leq i \leq \ell} (\ol{a}_i - \ol{a})},
        \]
        where all operations are applied componentwise.
        Now, for every element $e \in A \setminus \{\ol{a}\}$, $p_{A, \ol{a}}(e)$ yields $\ol{0}$, since one factor in the left multiplication will be $\ol{0}$.
        On the other hand, $p_{A, \ol{a}}(\ol{a})$ evaluates to $\ol{1}$, since the left product yields exactly $\prod_{1 \leq i \leq \ell} (\ol{a_i} - \ol{a})$, which is what it gets divided by on the right. 
        Thus, $p_{A, \ol{a}} = \chi_{A, \ol{a}}$, and since for any given $A$ and $\ol{a} \in A$, $p_{A, \ol{a}}$ is a polynomial of constant degree, it can be evaluated by a circuit of constant depth and polynomial size in $\lvert A \rvert$.
    \end{proof}

\begin{definition}
\label{def:circ_to_cgnn_feature_graph_construction_full}
    Let $C$ be a circuit of an $\faco$-circuit family with $n$ inputs, $m$ outputs and numbered gates via an injective function $nr\colon V_{C} \rightarrow \mathbb{R}$ such that for all gates $g, g'$ of $C_n$: $\depth(g) < \depth(g') \implies nr(g) < nr(g')$. Let $C$ also be in path-length-normal form and let $\ol{x} \in \left(\R^k\right)^{n}$ be a tuple of vectors, the input of the circuit. 

    Then $\lgc{C}$ is the labeled graph $(G_C, g_{V_C})$, where $G_C=(V_C, E_C)$ is the underlying graph of the circuit and the function $g_{V_C}$ is defined as follows:
    The feature vector of a graph node corresponding to an input gate is the corresponding input value.
    Each other node gets its own number according to the gate numbering of the circuit as its feature vector.
\end{definition}
We now proceed to prove the theorems of Section \ref{sec:sim_circ_with_cgnn} in depth. 

As mentioned in the main paper our proofs of this section are formulated using $\R$ circuits. 
This technically results in our C-GNNs having feature vectors of dimension 1, i.\,e. vectors in $\R^1$. 
For simplicity reasons and because we also stated that $\R$ circuits and $\R^1$ circuits are equivalent we use real numbers, still called feature vectors, instead.
\begin{customthm}{\ref{thm:circ-gnn-without-fnc}}
    Let $\mathcal{C} = (C_n)_{n \in \N}$ be an $\mathrm{FAC}^0_{\R}$-circuit family.
    Then there exists a $(t\mathrm{FAC}^0_{\R}, \{\mathrm{id}\})$-GNN $\NN$, such that for all $n \in \N$ and $\ol{x} \in \R^n$,
    the feature vectors of the nodes of $f_{\NN}(\lgc{C_n})$ corresponding to output gates in $C_n$ are exactly 
    the components of $f_{C_n}(\ol{x})$. 
    The number of layers in $\NN$ is equal to the depth of $\mathcal{C}$.
\end{customthm}
\begin{proof}
    Without loss of generality let all circuits of $\mathcal{C}$ be in path-length normal form and let the gates in the circuits $C_n$ of $\mathcal{C}$ be numbered via an injective function $nr\colon V_{C_n} \rightarrow \mathbb{R}$ satisfying the conditions of Definition \ref{def:circ_to_cgnn_feature_graph_construction_full}.

    We then define $\NN$ as follows:
    Let $d$ be the depth of the circuits in $\mathcal{C}$.
    $\NN$ has $d$ layers and maps each number in $[d]$ to a circuit family $\C^{(i)}_\NN$, i.e. it assigns a circuit family to every layer $i$.
    Since each node $v_g$ in the given labeled graph $\lgc{C_n}$ 
    represents a gate $g$ in the underlying circuit $C_n$, each of the circuit families $\mathcal{C}^{(i)}_\NN$ of the C-GNN essentially make use of local neighborhood information in the graph of $C_n$. 
    The circuit family $\mathcal{C}^{(i)}_{\NN}$ uses the gate numbers in the gate numbering of $C_n$, the depth and gate type of the respective gates and the number of the unique successor of each gate.
    

    Let $v_g^{(i)}$ denote the feature vector of node $v_g$ in layer $i$. 
    As per the definition of $\lgc{C_n}$, the feature vector of each node in $\lgc{C_n}$ is initialized with the gate number of the gate in $C_n$ it represents, and the nodes representing input gates are initialized with the respective input values of $\ol{x}$.
    In each layer $i$, the respective circuit of $\CiN$ determines whether to perform a computation or just keep the feature vector as is. 
    The circuit gets the previous feature vector $v_g^{(i-1)}$ of the node as one of its inputs. 
    It then checks whether $v_g^{(i-1)}$ is among the gate numbers of gates with depth $i$ in $C_n$. 
    This is done in the following way: 
    Let $Y^{(i)}$ be the set of all gate numbers for gates in $C_n$ that have depth $i$. 
    Now $\CiN$ simply computes $\chi_{Y^{(i)}, y}\left(v_g^{(i-1)}\right)$ for each $y \in Y^{(i)}$ (see Lemma \ref{lem:chiA}) to check if $v_g^{(i-1)}$ is the number of a gate at depth $i$.
    If that is not the case, then the gate $g$ with $nr(g)=y$ is not at depth $i$ and should not be evaluated in layer $i$ and $\CiN$ keeps its feature vector as is, i.e., outputs the one it was given as its input. 
    Otherwise, $\CiN$ computes the respective function of $g$ in the following way:
    $\CiN$ computes $\chi_{X_{\textit{op}}, y}\left(v_g^{(i-1)}\right)$ for $\textit{op} \in \{+, \times, out\}$, where $X_{\textit{op}}$ contains the gate numbers of $\textit{op}$-gates in $C_n$, to determine which function it needs to evaluate.
    If $\textit{op} = +$, then $\CiN$ computes the sum of all the feature vectors in the neighborhood of $v_g$. 

    Since the neighborhood of each node in $\lgc{C_n}$ contains not only the predecessors of the node, but also the successor of the node, this value still needs to be removed from the aggregation. 
    This is achieved by figuring out which node is the successor of $v_g$ using the gate number of the gate corresponding to $v_g$. 
    We extract this information from the circuit $C_n$ when constructing the C-GNN circuits and use it in the manner of a lookup table.
    For every $y \in Y^{(i)}$ we check whether $v_g^{(i-1)}=y$, using the $\chi$ function from Lemma \ref{lem:chiA}, and if true subtract the gate number we saved for $y$, which is $nr(w)$ where $w$ is the successor of $v_g$. 
    \CiN proceeds analogously for nodes corresponding to $\times$ gates. For nodes corresponding to output gates \CiN computes the sum over the single neighbor. As all input circuits to the C-GNN are in path length normal form it is ensured that nodes corresponding to output gates are only evaluated in the last layer of the C-GNN.
    The complete functionality of $\CiN$ for each layer $i$ is described by Algorithm \ref{alg:add_mult_layer_circ}.
    By Lemma~\ref{lem:chiA} we know that for any finite set $A$ and any $a \in A$, $\chi_{A, a}$ can be evaluated with a circuit whose size only depends on $\lvert A \rvert$ and whose depth is constant.
    Since we only need to evaluate polynomially many such functions and we can compute them in parallel, the circuit family we outline here is still an $\facoR$-circuit family.


    
    For example a node corresponding to a $+$ gate with number $5$ at depth 2, which knows that the gate its successor  has number $9$, will sum up all of its neighbors values and subtract $9$ in the second layer of the C-GNN. 
    See Figure \ref{fig:example_comp_circ_cgnn_proof} for visualisation.
    The only activation function used in each C-GNN layer is the identity function.
    \end{proof}
    \begin{algorithm}[h!]
    \caption{Proof Theorem \ref{thm:circ-gnn-without-fnc}: algorithm for the circuit family in layer $i$ of the C-GNN}
    \label{alg:add_mult_layer_circ}
    \begin{algorithmic}
        \STATE {\bfseries Input:} $v_g^{(i-1)}$, $U_g = \{u^{(i-1)} \mid u \in \mathcal{N}_G(v_g)\}$\
        \STATE $Y^{(i)} = \{ nr(g) \mid \depth(g) =i\}$\; \COMMENT{The set of numbers of gates that need to be evaluated in this layer} 
        \STATE $X_+ = \{ nr(g) \mid g \text{ is + gate} \}$\;
        \STATE $X_{\times} = \{ nr(g) \mid g \text{ is } \times \text{ gate} \}$\;
        \STATE $X_{out} = \{ nr(g) \mid g \text{ is } out \text{ gate} \}$\;
            \IF[If a node needs to be evaluated in this layer] {$v_g^{(i-1)} \in Y^{(i)}$} 
                \IF[If $v_g$ corresponds to an addition gate] {$v_g^{(i-1)}\in X_{+}$}
                \STATE $v_g^{(i)} = \sum U_g - nr\left(suc\left(v_g\right)\right)$\; 
                \ELSIF[If $v_g$ corresponds to a multiplication gate]{$v_g^{(i-1)}\in X_{\times}$}
                 \STATE $v_g^{(i)} = \prod U_g / nr\left(suc\left(v_g\right)\right)$\; 
                \ELSE[If $v_g$ corresponds to an output gate]
                \STATE $v_g^{(i)} = \sum U_g$\;
                \ENDIF
            \ELSE[If $v_g$ does not correspond to a gate at depth $i$]
            \STATE $v_g^{(i)} = v_g^{(i-1)}$\;
            \ENDIF
    \end{algorithmic}
    \end{algorithm}
    
    \begin{figure}[h!]
    \centering
    \begin{subfigure}[htb]{\textwidth}
        \centering
        \resizebox{.8\linewidth}{!}{
            \begin{subfigure}[b]{0.5\textwidth}
                \centering
                \begin{tikzpicture}[gate/.style={circle, draw, minimum size =1cm}, every path/.style={-stealth}]
                    \node[gate](v5){$+$};
                    \node[] (nr5) [right=0.5cm of v5] {$nr\left(g_+\right)=5$};
                    \node[gate](v9)[below=of v5]{$z$};
                    \node[] (nr9)[right=0.5cm of v9]{$nr\left(g_z\right)=9$};
                    \node[gate] (vx) [above left=of v5] {$x$};
                    \node[gate] (vy) [above right=of v5] {$y$};
                    \node[] (depth2) [left=2cm of v5] {\textbf{depth 2}};
                    \node[] (depth3) [left=2cm of v9] {\textbf{depth 3}};
                    \node[] (depth1) [above= of depth2] {\textbf{depth 1}};
                    \node[] (dots) [below = of v9] {$\vdots$};
                    \node[] (inx) [above= of vx] {$3$};
                    \node[] (iny) [above =of vy] {$7$};
                    \draw[] (vx) -- (v5);
                    \draw[] (vy) -- (v5);
                    \draw[] (v5) -- (v9);
                    \draw[] (v9) -- (dots);
                    \draw[] (inx) -- (vx);
                    \draw[] (iny) -- (vy);
                \end{tikzpicture}
                \caption{Initial circuit}
            \end{subfigure}
            \begin{subfigure}[b]{0.5\textwidth}
                \centering
                \begin{tikzpicture}[gate/.style={circle, draw, minimum size =1cm}]
                    \node[gate](v5){$v_{g_+}$};
                    \node[] (v5_val) [right =0.1cm of v5] {$v_{g_+}^{(1)}=5$};
                    \node[gate](v9)[below=of v5]{$v_{g_z}$};
                    \node[] (v9_val) [right =0.1cm of v9] {$v_{g_z}^{(1)}=9$};   
                    \node[gate] (vx) [above left=of v5] {$v_{g_x}$};
                    \node[] (vx_val) [right =0.1cm of vx] {$v_{g_x}^{(1)}=3$};
                    \node[gate] (vy) [above right=of v5] {$v_{g_y}$};
                    \node[] (vy_val) [right =0.1cm of vy] {$v_{g_y}^{(1)}=7$};
                    \node[] (dots) [below = of v9] {$\vdots$};
                    \node[] (xdots) [above =of vx] {$\vdots$};
                    \node[] (ydots) [above =of vy] {$\vdots$};
                    \draw[-] (vx) -- (v5);
                    \draw[-] (vy) -- (v5);
                    \draw[-] (v5) -- (v9);
                    \draw[-] (v9) -- (dots);
                    \draw[-] (xdots) -- (vx);
                    \draw[-] (ydots) -- (vy);
                \end{tikzpicture}
                \caption{Feature graph with feature vectors}
            \end{subfigure}
        }
        \caption{Before the computation  of the layer of the C-GNN}
    \end{subfigure}
    \begin{subfigure}[htb]{\textwidth}
        \centering
        \resizebox{.8\linewidth}{!}{
            \begin{subfigure}[b]{0.5\textwidth}
                \centering
                \begin{tikzpicture}[
                    gate/.style={circle, draw, minimum size =1cm}, every path/.style={-stealth}
                    ]
                    \node[gate](v5){$+$};
                    \node[] (nr5) [right=0.5 cm of v5] {$nr\left(g_+\right)=5$};
                    \node[gate](v9)[below=of v5]{$z$};
                    \node[] (nr9)[right=0.5cm of v9]{$nr\left(g_z\right)=9$};
                    \node[gate] (vx) [above left=of v5] {$x$};
                    \node[gate] (vy) [above right=of v5] {$y$};
                    \node[] (depth2) [left=2cm of v5] {\textbf{depth 2}};
                    \node[] (depth3) [left=2cm of v9] {\textbf{depth 3}};
                    \node[] (depth1) [above= of depth2] {\textbf{depth 1}};
                    \node[] (dots) [below = of v9] {$\vdots$};
                    \node[] (inx) [above= of vx] {$3$};
                    \node[] (iny) [above =of vy] {$7$};
                    \draw[] (vx) -- (v5);
                    \draw[] (vy) -- (v5);
                    \draw[] (v5) -- (v9);
                    \draw[] (v9) -- (dots);
                    \draw[] (inx) -- (vx);
                    \draw[] (iny) -- (vy);
                \end{tikzpicture}
                \caption{Initial circuit}
            \end{subfigure}
            \begin{subfigure}[b]{0.5\textwidth}
                \centering
                \begin{tikzpicture}[
                    gate/.style={circle, draw, minimum size =1cm}
                    ]
                    \node[gate](v5){$v_{g_+}$};
                    \node[] (v5_val) [right =0.1cm of v5] {$v_{g_+}^{(2)}=10$};
                    \node[gate](v9)[below=of v5]{$v_{g_z}$};
                    \node[] (v9_val) [right =0.1cm of v9] {$v_{g_z}^{(2)}=9$};   
                    \node[gate] (vx) [above left=of v5] {$v_{g_x}$};
                    \node[gate] (vy) [above right=of v5] {$v_{g_y}$};
                    \node[] (dots) [below = of v9] {$\vdots$};
                    \node[] (xdots) [above =of vx] {$\vdots$};
                    \node[] (ydots) [above =of vy] {$\vdots$};
                    \draw[-] (vx) -- (v5);
                    \draw[-] (vy) -- (v5);
                    \draw[-] (v5) -- (v9);
                    \draw[-] (v9) -- (dots);
                    \draw[-] (xdots) -- (vx);
                    \draw[-] (ydots) -- (vy);
                \end{tikzpicture}
                \caption{Feature graph with feature vectors}
            \end{subfigure}
        }
        \caption{After the computation of the layer of the C-GNN}
    \end{subfigure}
    \caption{Example in the proof of Theorem \ref{thm:circ-gnn-without-fnc}\protect \footnotemark } 
    \label{fig:example_comp_circ_cgnn_proof}
\end{figure}
\footnotetext{For readability purposes the one-dimensional vectors in the C-GNN are written as real numbers.}
\begin{customthm}{\ref{thm:circ-cgnn-fnc-in-circ}}
    Let $\mathcal{C} = (C_n)_{n \in \N}$ be an $\mathrm{FAC}^0_{\R}[\mathcal{A}]$-circuit family.
    Then there exists a $(t\mathrm{FAC}^0_{\R}[\mathcal{A}], \{\mathrm{id}\})$-GNN $\NN$, such that for all $n \in \N$ and $\ol{x} \in \R^n$,
    the feature vectors of the nodes of $f_{\NN}\left(\lgc{C_n}\right)$ corresponding to output gates in $C_n$ are exactly 
    the components of $f_{C_n}(\ol{x})$.
    The number of layers in $\NN$ is equal to the depth of $\mathcal{C}$.
\end{customthm}
    
\begin{proof}
    This proof follows the same ideas as the proof of Theorem \ref{thm:circ-gnn-without-fnc} but includes the occurrence of gates computing functions besides + and $\times$ in the circuit family $\mathcal{C}$ as well as in the circuit families \CiN used in the C-GNN.

    Without loss of generality let all circuits of $\mathcal{C}$ be in path-length normal form and let the gates in the circuits $C_n$ of $\mathcal{C}$ be numbered via an injective function as described in Definition \ref{def:circ_to_cgnn_feature_graph_construction_full}.
    We will use $nr(C_n)$ to denote the set of all gate numbers in the circuit $C_n$.
    
    We now define \NN as follows:
    For every circuit in all circuit families \CiN, we initialize all nodes with their respective gate number or input value as described in the proof of Theorem \ref{thm:circ-gnn-without-fnc}.
    The sets $Y^{(i)}$ for all layers $i$ of the \C-GNN and the sets $X_+$, $X_{\times}$ and $X_\textit{out}$ are defined the same as well. 
    Additionally, we now need sets $X_\sigma$ for every function $\sigma \in \mathcal{A}$, such that $X_\sigma$ contains all gate numbers of gates computing $\sigma$ in $C_n$. 
    All these sets are finite, since they are all subsets of $nr(C_n)$ which is the set of gate numbers of $C_n$.
    
    The idea is to handle the nodes corresponding to $\sigma$-gates in the same way as the other nodes -- for every node $v_g$ we check if it is to be evaluated in this layer, i.e. the depth of gate $g$ in $C_n$ equals the current layer $i$, by checking whether $v_g^{(i-1)} \in Y^{(i)}$ and if so, assess of which type $g$ is by checking $v_g^{(i-1)}$ for inclusion in a set $X_{\textit{op}}$ where $\textit{op} \in \{ +, \times, \textit{out} \} \cup \mathcal{A}$.
    We do this as in the proof of Theorem \ref{thm:circ-gnn-without-fnc} by computing $\chi_{X_{\textit{op}}, x}\left(v_g^{(i-1)}\right)$ for all $X_{\textit{op}}$ and all $x \in X_{\textit{op}}$. 
    
    If we determined that a gate $g$ corresponding to the node $v_g$ computes a function $\sigma \in \mathcal{A}$, we compute the function on the feature vector of the node corresponding to its predecessor in $C_n$. 
    Since all $\sigma \in \mathcal{A}$ are unary functions, $g$ has only one predecessor and given that $C_n$ is in path-length normal form, it also has only one successor. 
    Let $u$ be the predecessor and $w$ be the successor of $v_g$. 
    We compute $v_g^{(i)}=\sigma\left(u^{(i-1)}\right)$ by summing up the feature vectors of all neighbors of $v_g$ in $\lgc{C_n}$ and then determining the gate number of the gate corresponding to $w$ and subtract this value from the sum as described in the proof of Theorem \ref{thm:circ-gnn-without-fnc}.
    This leaves us with the feature vector of the predecessor of $v_g$, since $u^{(i-1)}+w^{(i-1)}-w^{(i-1)}=u^{(i-1)}$.
    Then $\sigma$ is applied to $u^{(i-1)}$ using an $\sigma$-gate and $v_g^{(i)}$ is set to this value. 

    For all nodes $v_h$ which do not represent gates which have depth $i$ in $C_n$, the new feature vector $v_h^{(i)}$ equals $v_h^{(i-1)}.$ 
    
    All layers of \NN use the identity as their activation function.
\end{proof}

\begin{customthm}{\ref{thm:circ-gnn-fnc}}
    Let $\mathcal{A}$ be a set of countably injective activation functions and let $\mathcal{C} = (C_n)_{n \in \N}$ be an $f\mathrm{FAC}^0_{\R}[\mathcal{A}]$-circuit family. 
    Then there exists a $(t\mathrm{FAC}^0_{\R}, \mathcal{A} \cup \{ \mathrm{id}\})$-GNN $\NN$, such that for all $n \in \N$ and $\ol{x} \in (\R)^n$,
    the feature vectors of the nodes of $f_{\NN}\left(\lgc{C_n}\right)$ corresponding to output gates in $C_n$ are exactly 
    the components of $f_{C_n}(\ol{x})$.
    The number of layers in $\NN$ is equal to the depth of $\mathcal{C}$.
\end{customthm}

\begin{proof}
We again reuse the basic concept of the proof of Theorem~\ref{thm:circ-gnn-without-fnc} but adapt it for function gates. 
Unlike in Theorem~\ref{thm:circ-cgnn-fnc-in-circ}, the circuit families of our $(t\mathrm{FAC}^0_{\R}, \mathcal{A} \cup \{ \mathrm{id}\})$-GNN do not have arbitrary access to functions in $\mathcal{A}$ and thus cannot decide for each individual node, whether to apply an activation or not. 
We can only apply it to all of our nodes' feature vectors, or to none of them. 

Let $\sigma \in \mathcal{A}$ be countably injective on a set $S \subseteq \R$ and let the gates in $C_n$ be numbered as described in Definition \ref{def:circ_to_cgnn_feature_graph_construction_full} by values from $S$.
Let $i \in \N$ be such that all gates at depth $i$ in $C_n$ are $\sigma$-gates.
Since $C_n$ is in function-layer form, if any gate at depth $i$ is a $\sigma$-gate, then all gates at depth $i$ are.
The circuit family $\mathcal{C}_\NN^{(i)}$ used for each node in layer $i$ in $\NN$ then first checks whether its previous feature vector $v_g^{(i-1)}$ corresponds to a gate number of a gate $g$ in layer $i$ by querying $v_g^{(i-1)} \in Y^{(i)}$ using $\chi_{Y^{(i)},y}\left(v_g^{(i-1)}\right)$, where $Y^{(i)}$ is the set of all gate numbers for gates in $C_n$ that have depth $i$, as introduced before.

If $v_g^{(i-1)} \in Y^{(i)}$, then $\mathcal{C}_\NN^{(i)}$ proceeds as in the case for addition gates in Algorithm~\ref{alg:add_mult_layer_circ}, summing over its neighbors and removing the value of its successor.
This way, after the activation function is applied, $v_g^{(i)}=\sigma\left(v_g^{(i-1)}\right)$, the intended value after layer $i$ in $\NN$.
\begin{algorithm}[htb]
        \caption{Proof Theorem \ref{thm:circ-gnn-fnc}: algorithm for the circuit in layer $i$ of the C-GNN if layer $i$ evaluates function gates}
        \label{alg:func_layer_circ}
        \begin{algorithmic}
        \STATE {\bfseries Input:} $v_g^{(i-1)}$, $U_g = \{u^{(i-1)} \mid u \in \mathcal{N}_G(v_g)\}$\;
        \STATE $Y^{(i)} = \{ nr(g) \mid \depth(g) =i\}$\; \COMMENT{The set of numbers of gates that need to be evaluated in this layer}
        \STATE $X_+ = \{ nr(g) \mid g \text{ is $+$ gate} \}$\;
        \STATE $X_{\times} = \{ nr(g) \mid g \text{ is } \times \text{ gate} \}$\;
        \STATE $X_{out} = \{ nr(g) \mid g \text{ is } \textit{out} \text{ gate} \}$\;
        \IF[If a node needs to be evaluated in this layer]{$v_g^{(i-1)} \in Y^{(i)}$}  
            \IF[If $v_g$ corresponds to addition gate]{$v_g^{(i-1)}\in X_{+}$}
            \STATE $v_g^{(i)} = \sum U_g - nr\left(suc\left(v_g\right)\right)$\; 
            \ELSIF[If $v_g$ corresponds to multiplication gate]{$v_g^{(i-1)}\in \sigma\left(X_{\times}\right)$}
            \STATE $v_g^{(i)} = \prod U_g / nr\left(suc\left(v_g\right)\right)$\; 
            \ELSE[If $v_g$ corresponds to output gate]
            \STATE $v_g^{(i)} = \sum U_g$
            \ENDIF
        \ELSE[If $v_g$ does not correspond to a gate at depth $i$]
        \STATE $v_g^{(i)} = \sigma^{-1}\left(v_g^{(i-1)}\right)$\; 
        \ENDIF
        \end{algorithmic}
    \end{algorithm}
If $v_g^{(i-1)} \notin Y^{(i)}$, we need to make sure that after the application of $\sigma$ as an activation function, $v_g^{(i)}$ is still the gate number of its corresponding gate $g$ in $C_n$.
For this purpose, $\mathcal{C}_\NN^{(i)}$ determines and outputs $\sigma^{-1}\left(v_g^{(i-1)}\right)$, so that after applying the activation function, $v_g^{(i)} = \sigma\left(\sigma^{-1}\left(v_g^{(i-1)}\right)\right) = v_g^{(i-1)}$.
The inverse $\sigma^{-1}\left(v_g^{(i-1)}\right)$ exists, because $\sigma$ is countably injective in the set of gate numbers of $C_n$.
The circuit family $\mathcal{C}_\NN^{(i)}$ sets its output to $\sigma^{-1}\left(v_g^{(i-1)}\right)$ by computing $\sum_{v \in \textit{nr}(C_n)} \sigma^{-1}(v) \cdot \chi_{S,v}\left(v_g^{(i-1)}\right)$, where $\textit{nr}(C_n)$ is the set of gate numbers of $C_n$.
This works for all gates at depths $>i$.
Gates at depths $<i$ do not have an influence on the output of the computation anymore, and what happens to them is thus of no importance.

The procedure performed by $\mathcal{C}_\NN^{(i)}$ is outlined in Algorithm~\ref{alg:func_layer_circ}.

The preceding procedure requires that all functions in $\mathcal{A}$ are countably injective in the same set.
If they are not countably injective in the same set, we modify the construction by using a new gate numbering of $C_n$ for each function layer, assigning new values to the feature vectors before the activation function is used. 
All sets of numbers of specific gates, e.g. numbers of addition gates are updated accordingly.

\end{proof}

\end{document}